\theoremstyle{plain}
\newtheorem{theorem}{Theorem}[section]
\newtheorem{proposition}[theorem]{Proposition}
\theoremstyle{definition}
\newtheorem{definition}[theorem]{Definition}
\newtheorem{assumption}[theorem]{Assumption}
\theoremstyle{remark}
\DeclareMathOperator*{\argmin}{arg\,min}
\DeclareMathOperator*{\argmax}{argmax}
\newcommand*{\eg}{e.g.\@\xspace}
\newcommand*{\ie}{i.e.\@\xspace}
\newcommand{\x}{\mathbf{x}}
\newcommand{\z}{\mathbf{z}}
\newcommand{\h}{\mathbf{h}}
\newcommand{\w}{\mathbf{w}}
\newcommand*{\etc}{%
    \@ifnextchar{.}%
        {etc}%
        {etc.\@\xspace}%
}
\newcommand\hl{\bgroup\markoverwith
  {\textcolor{yellow}{\rule[-.5ex]{2pt}{2.5ex}}}\ULon}
\icmltitlerunning{Great Models Think Alike: Improving Model Reliability via Inter-Model Latent Agreement}
\begin{document}

\twocolumn[
\icmltitle{Great Models Think Alike: Improving Model Reliability via Inter-Model Latent Agreement}

\begin{icmlauthorlist}
\icmlauthor{Ailin Deng}{sch}
\icmlauthor{Miao Xiong}{yyy}
\icmlauthor{Bryan Hooi}{sch,yyy}
\end{icmlauthorlist}

\icmlaffiliation{yyy}{Institute of Data Science, National University of Singapore, Singapore}
\icmlaffiliation{sch}{School of Computing, National University of Singapore, Singapore}

\icmlcorrespondingauthor{Ailin Deng}{ailin@u.nus.edu}

\icmlkeywords{Machine Learning, ICML}

\vskip 0.3in
]

\printAffiliationsAndNotice{}  %

\begin{abstract}
Reliable application of machine learning is of primary importance to the practical deployment of deep learning methods. A fundamental challenge is that models are often unreliable due to overconfidence~\cite{hendrycks17baseline}. In this paper, we estimate a model's reliability by measuring \emph{the agreement between its latent space, and the latent space of a foundation model}. However, it is challenging to measure the agreement between two different latent spaces due to their incoherence, \eg, arbitrary rotations and different dimensionality. To overcome this incoherence issue, we design a \emph{neighborhood agreement measure} between latent spaces and find that this agreement is surprisingly well-correlated with the reliability of a model's predictions. Further, we show that fusing neighborhood agreement into a model's predictive confidence in a post-hoc way significantly improves its reliability. 
Theoretical analysis and extensive experiments on failure detection across various datasets verify the effectiveness of our method on both in-distribution and out-of-distribution settings.

\end{abstract}

\section{Introduction}
\label{sec:intro}

Model reliability is a critical and challenging issue in deep neural networks for deploying neural network systems in real-world applications, particularly in safety-critical domains such as autonomous driving and medical diagnosis. In particular, a key challenge is that models tend to be overconfident~\cite{guo2017calibration,hendrycks17baseline,Ovadia2019CanYT}, and it is hard to identify   such overconfidence purely based on the model's own internal states, as these internal states are themselves potentially unreliable. A possible answer to this dilemma comes from the recent emergence of powerful general purpose (or `foundation') models~\cite{brown2020language, bommasani2021opportunities,radford2021learning}, which provide rich ``implicit knowledge" while being freely available for use without additional training costs. This presents an opportunity to use them to assist in evaluating the reliability of a newly trained model.

As encapsulated by the phrase ``great minds think alike", it is intuitive that a model tends to be more reliable on some input if its reasoning aligns well with that of other models. For example, if we want to examine whether a human learner has well-understood some input image (e.g. an animal), we can ask them questions about it (e.g. what animals is it similar to?): the more they agree with other human learners, the more confident we can be that they have correctly understood the image. Similarly, for models, we want to evaluate the reliability of a model by estimating the extent to which its reasoning agrees with that of a foundation model. This further leads to the main challenge: how can we quantitatively measure how much two models agree on a concept? In this work, we propose to measure this model agreement via latent spaces. 
 Intuitively, the two models agree if their latent spaces ``model the concept similarly"; however, this is complicated by the fact that different models are typically trained with different data distributions, model architectures and optimization objectives, leading to incoherence between different latent spaces: e.g. differing by an arbitrary rotation, and different dimensionalities.

To solve this problem, we introduce \emph{inter-model latent agreement} - a framework for measuring of how much two models agree on a sample while avoiding the incoherence issue, and then show its utility for estimating and enhancing model reliability. Our framework uses the similarity of neighborhoods in the latent spaces of an input as a proxy task to measure agreement between latent spaces of different models.

In particular, we present our main empirical observation that the reliability (or probability of correctness) of a model on a sample correlates well with its inter-model latent agreement with foundation models on that sample. Motivated by this, we propose our latent space agreement framework, which exploits the inter-model agreement to improve prediction reliability in a post-hoc way. Notably, our proposed inter-model latent agreement can be measured with only unlabeled samples, making it more broadly applicable. We further verify the effectiveness of our framework by conducting extensive experiments on failure detection over various datasets and provide theoretical analysis for our method.

Overall, the contributions and benefits of our approach are as follows:
\begin{itemize}[noitemsep,nolistsep]
    \item (Empirical Findings) We show that the inter-model agreement highly correlates with classification accuracy, suggesting the value of latent space agreement to improve a newly trained model's reliability.
    \item (Generality) We propose a general framework that enables using any foundation model via latent spaces in a post-hoc way without any fine-tuning to improve the predictive reliability of a newly trained model.
    \item (Effectiveness) We quantitatively verify the performance of our framework on failure detection across various datasets, including large-scale in-distribution (ID) and out-of-distribution (OOD) datasets, and provide further exploration, empirical and theoretical justification of the framework.

\end{itemize}

\section{Related Work}
\label{sec:background}
\subsection{Failure Detection}
The main goal of failure detection is to predict whether a trained classifier will make an error on a test sample~\cite{jaeger2023a}. \citealt{hendrycks17baseline} propose maximum softmax probabilty (MSP), which directly uses the softmax predictions of the trained model. Follow-up works propose other uncertainty measures from a trained model, based on Monte Carlo Dropout or aggregated from multiple trained models, \eg, predictive entropy or variants~\cite{gal2016dropout,lakshminarayanan2017simple,liu2020energy}. \citealt{jiang2018trust} propose a distance ratio in the latent space of a classifier. In addition to detecting failures in in-distribution data, detecting failures under distribution shifts is a crucial problem to enhance model reliability in real-world applications, and has received increasing attention~\cite{hendrycks2019robustness,koh2021wilds,vaze2022openset}. Previous works have aimed to utilize internal information from a trained classifier~\cite{xiong2022birds,deng2022trust}. However, a classifier itself can be potentially unreliable~\cite{guo2017calibration,hein2019relu}, which motivates us to use external information to validate the reliability of a prediction and further enhance failure detection.

\subsection{Foundation Models}
The recent powerful foundation models~\cite{radford2021learning,bommasani2021opportunities} are pretrained on large-scale data and can provide rich ``implicit knowledge" to validate the prediction from a newly trained classifier. The prevalent paradigm to use these foundation models is fine-tuning with downstream data. However, traditional fine-tuning is often computationally intensive and requires a large amount of downstream labeled data, particularly as foundation models continue to grow in size. To alleviate these issues, recent methods propose to adapt the foundation models before use for some applications by prompting, instead of fully fine-tuning~\cite{bommasani2021opportunities}. Our method is different from the previous works as we propose to use the agreement between the latent space of a trained classifier and the latent space of a foundation model to validate a prediction, which requires no fine-tuning or adaptation.

\section{Proposed Method}
\label{sec:meth}
\subsection{Preliminaries}

Let $D = \{\mathbf{x}^{(i)}, y^{(i)}\}_{i=1}^n$, denote the training dataset containing $n$ samples, where $ \mathbf{x}^{(i)} \in  \mathbb{R}^m$ is the $i$-th input sample and $y^{(i)} \in \mathcal{Y} = \{ 1, \dots, C\}$ is the corresponding true class. A classification model consists of two parts: a feature extractor $B : \mathcal{X} \to \mathbb{R}^d$ and a linear head $f_w: \mathbb{R}^d \to \mathbb{R}^C$, parameterized by $w$. Given an input $\mathbf{x}$, the model produces a latent feature vector $\mathbf{z} = B(\mathbf{x})$ followed by the softmax probability output and predictive label:
\begin{align}
    \hat{P}(Y \mid \x, B, w ) = \mathsf{softmax}(f_w(\z)) \\
    \hat{y} = \argmax_{c \in \mathcal{Y}} \hat{P}(Y = c \mid \x, B, w).
\end{align}
Given an input $\x$, a foundation model can also produce a latent feature vector $\h = H (\x) \in \mathbb{R}^{h}$ where $ H : \mathcal{X} \to  \mathbb{R}^h $. For example, $H$ can be an image encoder from a multi-modal foundation model~\cite{radford2021learning}.

\subsection{Problem Definition}
\paragraph{Failure Detection} Also known as misclassification or error prediction~\cite{hendrycks17baseline}, failure detection aims to predict if a trained model makes an erroneous prediction on a test sample. In general, it requires a score for any given sample's prediction, where a lower score implies that the prediction is more likely to be wrong. 

For a standard network, the baseline method is to use maximum softmax output as the confidence score for failure detection ~\cite{hendrycks17baseline,Ovadia2019CanYT}:
\begin{align}
    \hat{p} \coloneqq \hat{P}(Y = \hat{y} \mid \x, B, w )
\end{align}
However, merely relying on the obtained confidence score from a newly trained classifier can be unsafe due to the overconfidence issue ~\cite{hendrycks17baseline,guo2017calibration} and this concern is even more pronounced under distribution shifts~\cite{Ovadia2019CanYT,hendrycks2019robustness,taori2020measuring}. We thus propose to employ information from foundation models to improve model reliability, instead of only using the information from the trained model.

\subsection{Inter-Model Latent Agreement Framework}

\paragraph{Overview} We propose an inter-model latent agreement framework to compute the agreement scores based on latent spaces and use it as an auxiliary source of information to boost the failure detection performance. The framework involves two steps: 1) measuring agreement between a newly trained model and a foundation model on samples; 2) fusing the agreement information into the predictive confidence in a post-hoc way via input-dependent temperature scaling.

\paragraph{Measuring Inter-Model Latent Agreement}

Neural network models usually first project input data into latent space, then perform classification or generation based on the latent spaces. Thus, latent spaces are informative and can be taken as the network's capacity for capturing the information in input samples. In this work, we aim to measure latent space agreement between models to represent their agreement.

Given an encoder $B$ from a trained classifier and a pretrained encoder $H$ from a foundation model, to estimate how reliable the trained classifier is on a sample $\x$, we aim to estimate the agreement between the models' latent spaces around $\x$. Specifically, we compute feature vectors $\z = B(\x)$ and $\h = H(\x)$ with the encoder $B$ and pretrained encoder $H$, respectively. However, as these latent spaces can be incoherent, \eg, differing by an unknown rotation, or different dimensions of latent spaces ($d \neq h$), this makes explicit distance comparison between $\z \in \mathbb{R}^d$ and $\h \in \mathbb{R}^h$ unsuitable.

To overcome this incompatibility, we compare \emph{neighborhoods} (e.g. nearest neighbors and distances to them) between two latent spaces around a sample as a surrogate task to measure the agreement instead of a direct distance measure between $\z$ and $\h$. For example, if two latent spaces are identical after a rotation transformation, the neighborhoods of a sample in the two latent spaces must be the same. Conversely, if the neighborhoods of a sample in the two latent spaces are similar, the two latent spaces around this sample are expected to be similar through some unknown transformation, due to the high level of agreement between these two latent spaces.

Specifically, denote the test sample as $\x^{\rm test}$, the encoder $B$ from a classifier and the classifier's training dataset $D = \{\mathbf{x}^{(i)}, y^{(i)}\}_{i=1}^n$. We obtain the feature vectors $\z^{\rm test} \coloneqq B(\x^{\rm test})$ and $\z_i \coloneqq B(\x^{(i)})$ for $1 \le i \le n$. We denote:
\begin{align}
    \mathbb{Z} \coloneqq (\z_1, \z_2, \dots, \z_n).  \label{eq:z_vec}
\end{align}
Similarly, we repeat this process with the pretrained encoder $H$ to get the feature vectors $\h^{\rm test} \coloneqq H(\x^{\rm test})$ and $\h_i \coloneqq H(\x^{(i)})$ for $1 \le i \le n$. We denote:
\begin{align}
    \mathbb{H} \coloneqq (\h_1, \h_2, \dots, \h_n). 
    \label{eq:h_vec}
\end{align}
To represent the ranking of training samples based on their similarity to the test sample in the latent space, we use a permutation generation function $G$ to produce a permutation containing the indexes of feature vectors in the training feature set $\mathbb{Z}$, ordered from nearest to farthest distance from the test feature vector $\z^{\rm test}$:
\begin{align}
    \begin{split}
    G(\z^{\rm test}, \mathbb{Z}) \coloneqq &(\Pi_{(1)}, \Pi_{(2)}, \dots, \Pi_{(n)}) \\
    \mathrm{s.t.} \quad s(\z^{\rm test}, \z_{\Pi_{(1)}}) \geq & s(\z^{\rm test}, \z_{\Pi_{(2)}}) \geq s(\z^{\rm test}, \z_{\Pi_{(n)}}),
    \end{split}
\end{align}
where we use the cosine similarity function as $s$. Similarly, we get another permutation using $\h^{\rm test}$ and $\mathbb{H}$ for the same test sample $\x^{\rm test}$ based on the pretrained encoder $H$. We use $\Pi^* \coloneqq G(\z^{\rm test}, \mathbb{Z})$ and $\Pi^\prime \coloneqq G(\h^{\rm test}, \mathbb{H})$ to represent the permutations obtained from the encoder $B$ from the classifier and the pretrained encoder $H$, respectively.

Next, to measure the similarity between two permutations $\Pi^*$ and $\Pi^\prime$, we introduce Normalized Discounted Cumulative Gain (NDCG), a ranking quality measure.

\begin{definition}
(Normalized Discounted Cumulative Gain (NDCG)~\cite{jarvelin2002cumulated}). 
Given a ranking $\Pi^*$ and another ranking $\Pi^\prime$, let $r$ denote our importance scoring function, where $r(i)$ outputs the importance score of the $i$-th sample, and $ r({\Pi^*_{(1)}}) \geq r({\Pi^*_{(2)}}) \geq \dots \geq  r({\Pi^*_{(n)}}) $:
\begin{align}\label{def:ndcg}
    \mathsf{NDCG}(\Pi^*, \Pi^\prime, r) \coloneqq 
    \frac{ \sum_{i}^n \frac{ r({\Pi^\prime_{(i)}}) } { \log{(i+1)} }}
         {{\sum_{i}^n \frac{ r({\Pi^*_{(i)}}) } { \log{(i+1)} }} }.
\end{align}
\end{definition}
The NDCG values range from $0$ to $1$ after normalization. Intuitively, the NDCG metrics quantitatively evaluate the ranking quality of $\Pi^\prime$ compared to the ranking $\Pi^*$, considering the importance scoring function $r$ and ranking position penalty with logarithmic discounting function. Note that any importance scoring function $r$ which satisfies the requirement of producing decreasing values according to the perfect ranking is plausible. In particular, $r$ can be a function outputting $0$ and $1$ depending on whether the training sample is one of the $k$-nearest training samples or not:
\begin{align}
    r(i) \coloneqq \mathbbm{1}(\z_i \in \mathcal{N}_{\mathbb{Z}, k} (\z^{\rm test}) ).
\end{align}
It means that we treat the nearest $k$ samples as most important, and we can control the neighborhood size with $k$. \citealt{wang2013theoretical} shows that this choice of importance scoring function also provides certain consistency guarantees.

As such, given the encoder $B$ from the trained classifier and pretrained encoders $H_1, \dots, H_m$ from $m$ different foundation models, we formally define our inter-model latent agreement score as follows:

\begin{definition}\label{def:agreement_score}
    (Inter-model Latent Agreement Score).
    Given a test sample $\x^{\rm test}$, the training samples $\{\x^{(i)}\}_{i=1}^n$, the encoder $B$ from a trained classifier and pretrained encoders $\mathcal{H} \coloneqq \{H_1, \dots, H_m\}$ from $m$ foundation models, recall $\mathbb{Z} \coloneqq (\z_1, \z_2, \dots, \z_n)$ and let $\mathbb{H}^i = (H_i(\x^{(1)}), H_i(\x^{(2)}), \dots, H_i(\x^{(n)})) $, the inter-model latent agreement score is:
    \begin{align}\label{eq:agreement_score}
        \mathsf{AS}(\x^{\rm test}, B, \mathcal{H}) \coloneqq \frac{1}{m} \sum_i^m \mathsf{NDCG} (\Pi^*, \Pi^i, r),
    \end{align}
    where $\Pi^* \coloneqq G(\z^{\rm test}, \mathbb{Z})$ and $\Pi^i \coloneqq G(H_i(\x^{\rm test}), \mathbb{H}^{i})$.
\end{definition}
The definition indicates that given a test sample, we average the ranking agreement across different foundation models as the inter-model latent agreement score.

\subsection{Main Empirical Observation}
\begin{figure}[t]
    \centering
    \includegraphics[width=0.5\linewidth]{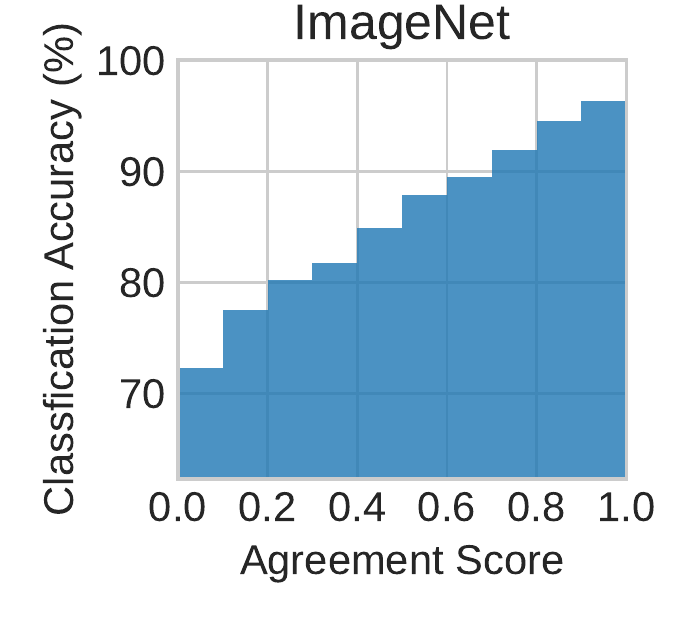}
    \caption{Positive correlation between agreement score and classification accuracy. The agreement score is calculated based on the ImageNet classifier (ViT-B/16) and CLIP ViT/L-14.}
    \label{fig:agree_vs_acc}
\end{figure}

From Figure \ref{fig:agree_vs_acc}, we observe that the agreement score has a clear positive correlation with the classification accuracy. This empirical evidence indicates that a prediction which has a higher latent space agreement with a foundation model tends to be predicted correctly in the classification task. This validates our use of agreement scores for assessing the reliability of a prediction and further improving the original predictive confidence to detect failure.

\subsection{Input-based Temperature Scaling}

As we aim to adjust the predictive confidence to improve failure detection performance but without altering the classifier's final predicted label, an appealling way is input-dependent temperature scaling, which is an extension of temperature scaling \cite{guo2017calibration,deng2022trust}. 
Classical temperature scaling uses a single scalar temperature parameter $t$ to rescale the softmax distribution. Using our agreement score for each sample $\x$ as prior information, we propose to obtain a scalar temperature $\tau(\x)$ as a learned function of the agreement score $\mathsf{AS}(\x, B, \mathcal{H})$ based on Definition \ref {def:agreement_score}: 
\begin{align}
    \tau(\x) & \coloneqq t + t_s \mathsf{AS}(\x, B, \mathcal{H}) \label{eq:sample_temp} \\
    \tilde{P}(Y \mid \x ) & \coloneqq \frac{\mathsf{softmax}(f_w(\z))}{ \tau(\x) } \label{eq:TS}
\end{align}
Here, $\tilde{P}(Y \mid \x)$ contains our output calibrated probabilities. $t$ and $t_s$ are learnable parameters; they are optimized via negative likelihood loss on the validation set, similarly to in classical temperature scaling~\cite{guo2017calibration}.
For each sample $\x$, we obtain $\tau(\x)$ as its input-dependent temperature. With $\tau(\x) = 1$, we recover the original predicted probabilities $\hat{p}$ for the sample. As all logit outputs of a sample are divided by the same scalar, the predictive label is unchanged. In this way, we calibrate the softmax distribution based on the agreement score, without compromising the model's accuracy. Note that though temperature scaling was mainly proposed for calibration, recent findings show temperature scaling can also improve failure detection ~\cite{galil2023what} by recalibrating the predictive probability distribution with proper scoring rules, which encourages both calibration and ranking for predictive confidence ~\cite{gneiting2007probabilistic,kuleshov2022calibrated}.

We summarize our framework in Algorithm \ref{alg:framework} in Appendix.

\section{Experiments}
\label{sec:exp}

In this section, we conduct experiments to answer the following research questions: 
\begin{itemize}[noitemsep,nolistsep]
    \item (Performance in ID: Section \ref{sec:perform_in_id}) How well does our method perform on in-distribution failure prediction compared to the baseline methods? 
    \item (Exploration Study: Section \ref{sec:exploration}) How do different foundation models affect our failure detection performance? How does this relate to the model family used?
    \item (Performance in OOD: Section \ref{sec:perform_in_ood}) How does it perform under OOD, \ie distribution shifts? 
    \item (Case Study: Section \ref{sec:case_study}) Can our method provide plausible explanation/visualization for samples with high/low agreement scores?
    \item (Ablation Study: Section \ref{sec:ablation}) How sensitive is the method to different hyperparameters? How does it perform when using other similarity measures?
    
\end{itemize}

\subsection{Experimental Setup}

\paragraph{Baselines}
Our baseline methods include the Maximum Softmax Probability (MSP)~\cite{hendrycks17baseline}, other uncertainty measure: Entropy and Energy~\cite{lakshminarayanan2017simple,liu2020energy}, a distance based measure: TrustScore~\cite{jiang2018trust}, MaxLogit proposed for distribution shift ~\cite{vaze2022openset} and vanilla Temperature Scaling (T.S.)~\cite{guo2017calibration}.

For details on datasets, classifiers, foundation/pretrained models, and experimental protocols, see Appendix \ref{appendix:exp_setup}.

\begin{table*}[h]
\centering
\caption{AUROC (\%) averaged over $6$ runs. The base models are finetuned based on pretrained models: ResNet-50 or ViT-B/16. ViT for ImagNet is fine-tuned on CLIP VIT-B/16 model. $\mathsf{AS}$: inter-model latent agreement score in Eq\eqref{eq:agreement_score}. single: uses CLIP ViT/L-14 as foundation model. multiple: uses CLIP ViT/L-14 and ViT/L-16 (ImageNet-21K) as foundation models. The best result is bolded.}
\label{tab:perform_on_id}
    \centering
    \scalebox{.9}{
    \begin{tabular}{@{}llcccccc@{}}
    \toprule
                  &  & CIFAR10       & CIFAR100      & STL         & BIRDS         & FOOD   & ImageNet        \\ 
    Model          & Method                &               &               &                &                &      &  \\ \cmidrule(l){1-8} 
     \multirow{7}{*}{CNN} 
 & MSP                              & 94.15$_{\pm 0.06} $          & 88.42$_{\pm 0.21} $          & 95.73$_{\pm 0.35} $          & 85.97$_{\pm 0.76} $          & 88.79$_{\pm 0.19} $          & 86.19$_{\pm 0.07} $          \\
 & Entropy                          & 94.14$_{\pm 0.07} $          & 88.45$_{\pm 0.22} $          & 95.62$_{\pm 0.36} $          & 84.83$_{\pm 0.88} $          & 88.78$_{\pm 0.21} $          & 84.05$_{\pm 0.06} $          \\
 & Energy                           & 90.83$_{\pm 0.32} $          & 83.84$_{\pm 0.29} $          & 94.31$_{\pm 0.64} $          & 77.72$_{\pm 2.10} $          & 83.83$_{\pm 0.19} $          & 72.72$_{\pm 0.11} $          \\

 & MaxLogit                         & 91.00$_{\pm 0.32} $          & 84.20$_{\pm 0.30} $          & 94.44$_{\pm 0.63} $          & 79.82$_{\pm 1.76} $          & 84.59$_{\pm 0.20} $          & 76.49$_{\pm 0.11} $          \\

& TrustScore & 95.84$_{\pm 0.24} $ & 89.13$_{\pm 0.20} $ & 97.70$_{\pm 0.20} $ & 84.45$_{\pm 1.27} $          & 85.71$_{\pm 0.26} $          & 75.44$_{\pm 1.19} $          \\

 & T.S.                     & 93.76$_{\pm 0.14} $          & 87.21$_{\pm 0.22} $          & 95.54$_{\pm 0.37} $          & 85.88$_{\pm 0.79} $          & 88.58$_{\pm 0.19} $          & 86.35$_{\pm 0.07} $          \\
 \cmidrule{2-8}
 & T.S. (w/ $\mathsf{AS}$, single)   & 97.45$_{\pm 0.05} $          & 89.63$_{\pm 0.22} $          & 99.32$_{\pm 0.12} $          & 88.27$_{\pm 0.71} $          & \textbf{92.33}$_{\pm 0.14} $ & 86.38$_{\pm 0.08} $         \\
 & T.S. (w/ $\mathsf{AS}$, multiple) & \textbf{98.07}$_{\pm 0.08} $ & \textbf{91.04}$_{\pm 0.24} $ & \textbf{99.33}$_{\pm 0.13} $ & \textbf{89.10}$_{\pm 0.65} $ & 92.17$_{\pm 0.16} $          & \textbf{86.39}$_{\pm 0.09} $ \\
    
    \cmidrule(l){1-8} 
\\
 \multirow{7}{*}{ViT}  
& MSP                              & 96.39$_{\pm 0.44} $          & 92.66$_{\pm 0.16} $          & 98.64$_{\pm 0.40} $          & 88.23$_{\pm 0.45} $          & 92.77$_{\pm 0.26} $          & 85.42$_{\pm 0.09} $          \\
 & Entropy                          & 96.36$_{\pm 0.44} $          & 92.52$_{\pm 0.17} $          & 98.60$_{\pm 0.40} $          & 87.58$_{\pm 0.45} $          & 92.83$_{\pm 0.27} $          & 81.92$_{\pm 0.09} $          \\
 & Energy                           & 91.56$_{\pm 0.76} $          & 83.67$_{\pm 0.52} $          & 93.52$_{\pm 1.24} $          & 76.73$_{\pm 0.72} $          & 87.00$_{\pm 0.32} $          & 65.81$_{\pm 0.11} $          \\
 & MaxLogit                         & 91.71$_{\pm 0.73} $          & 84.49$_{\pm 0.45} $          & 94.15$_{\pm 1.21} $          & 79.31$_{\pm 0.72} $          & 87.67$_{\pm 0.33} $          & 74.48$_{\pm 0.13} $          \\
 & TrustScore & 97.14$_{\pm 0.31} $ & 92.33$_{\pm 0.13} $          & 99.32$_{\pm 0.20} $ & 87.49$_{\pm 0.61} $          & 89.03$_{\pm 0.46} $          & 82.49$_{\pm 0.58} $          \\
 & T.S.                     & 96.11$_{\pm 0.50} $          & 92.31$_{\pm 0.12} $          & 98.63$_{\pm 0.40} $          & 88.11$_{\pm 0.46} $          & 92.83$_{\pm 0.25} $          & 86.44$_{\pm 0.09} $          \\
 \cmidrule{2-8}
  & T.S. (w/ $\mathsf{AS}$, single)   & 96.84$_{\pm 0.31} $          & 92.83$_{\pm 0.17} $          & \textbf{99.46}$_{\pm 0.18} $ & 88.78$_{\pm 0.47} $          & \textbf{93.81}$_{\pm 0.24} $ & 86.97$_{\pm 0.10} $            \\
 & T.S. (w/ $\mathsf{AS}$, multiple) & \textbf{97.36}$_{\pm 0.25} $ & \textbf{93.14}$_{\pm 0.15} $ & 99.34$_{\pm 0.28} $          & \textbf{88.98}$_{\pm 0.47} $ & 93.64$_{\pm 0.24} $          & \textbf{87.36}$_{\pm 0.10} $ \\
 \bottomrule
\end{tabular}}

\end{table*}

\subsection{Failure Detection in ID data}
\label{sec:evaluation}
We first evaluate failure detection performance in in-distribution (ID) data and further explore the effect of using different pretrained models. We also study the correlation between each pretrained model's performance and its corresponding KNN accuracy in the same dataset. We also find strong correlation in performance within each model family.   
\paragraph{Performance Evaluation}\label{sec:perform_in_id}
The reported single-model result uses the model with the best ImageNet accuracy in our model candidate pool, CLIP ViT/L-14. For multiple-model settings, we adopt the models with top $2$ ImageNet accuracy: CLIP ViT/L-14 and ViT/L-16 (ImageNet-21K). We conduct further analysis about the effect of different pretrained models later. 
 
 As shown in Table \ref{tab:perform_on_id}, our method can outperform the baseline methods over different datasets, including large-scale dataset, ImageNet, for both CNN and ViT classifiers. Our empirical result confirms the previous findings that ViT can generally perform better than CNN classifiers in accuracy on uncertainty estimation-related tasks ~\cite{fort2021exploring,minderer2021revisiting,galil2022models}.

\paragraph{Effect of Different Foundation Models}
\label{sec:exploration}

Given the diversity of foundation models, which can vary in terms of architecture, training data, and optimization losses, it is important to further investigate the impact of different pretrained models on our proposed method.

We demonstrate the average performance over base models and datasets (excluding ImageNet\footnote{As some pretrained models are trained with ImageNet samples, we exclude ImageNet dataset.}) for different pretrained models in Figure \ref{fig:flops_vs_per}, which shows that every pretrained model in our model candidate pool can surpass MSP on average. We compare the performance of models pretrained on datasets of increasing size: ImageNet-1K, ImageNet-21K, and CLIP WebImageText dataset (400 million image-text pairs). Echoing the previous findings in transfer learning~\cite{kolesnikov2020big}, we also observe a performance boost of models of larger size  pretrained on larger datasets. With the similar model size and pretrained on the same dataset, ViTs generally perform better than CNNs, except for the cases where the dataset is in a smaller scale, \eg, ImageNet-1K. The self-supervised pretrained models, \eg MoCov3 ViT/ResNet, achieve comparable results with models pretrained with supervision signals. Similar to the previous findings in ~\cite{kornblith2019better}, which suggests that better performing models transfer better to the downstream tasks, we find the failure detection performance of pretrained models related to these models' accuracy performance in ImagetNet-1K, as shown in Figure \ref{fig:imagenet_acc_vs_perform_series}. Thus, we suggest to select the pretrained model with the highest accuracy on ImageNet-1K for failure detection, without any prior knowledge about the trained models, \eg the 
training datasets.

\begin{figure}[t]
    \centering
    \includegraphics[width=0.8\linewidth]{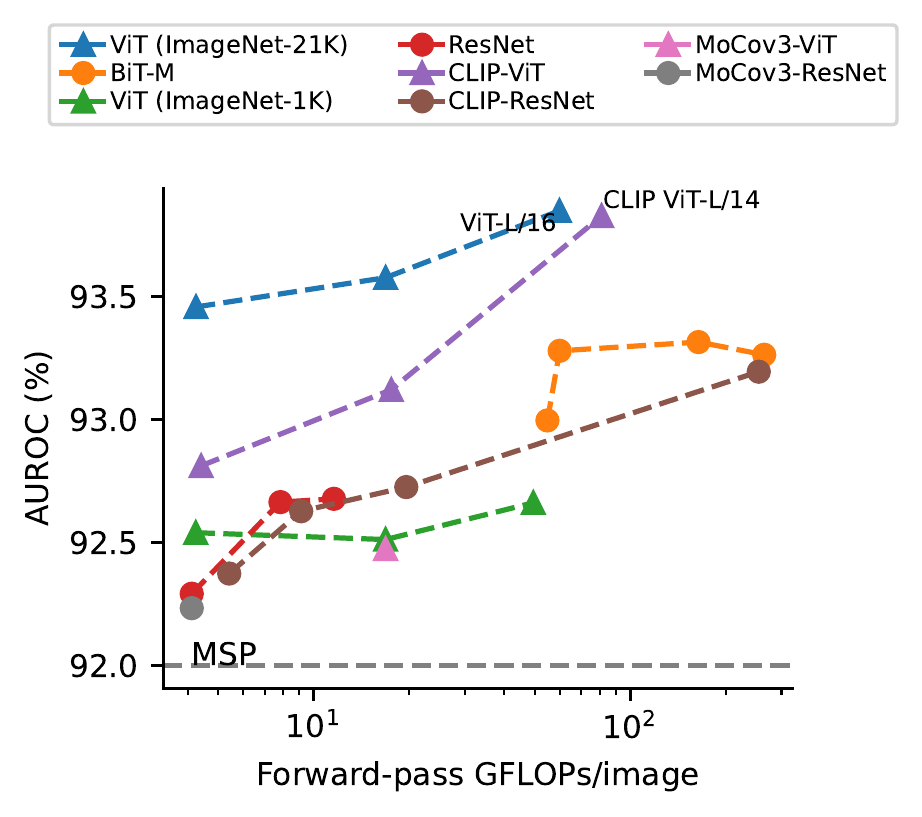}
    \caption{Performance of single-model with each pretrained model average over in-distribution datasets. x-axis: Inference GPU cost. See Appendix \ref{appendix:fail_detect} for plots for different base models.}
    \label{fig:flops_vs_per}
\end{figure}

\paragraph{Correlation Between Performance and KNN Accuracy}
\begin{figure}
    \centering
    \begin{subfigure}{0.49\linewidth}
        \includegraphics[width=\linewidth]{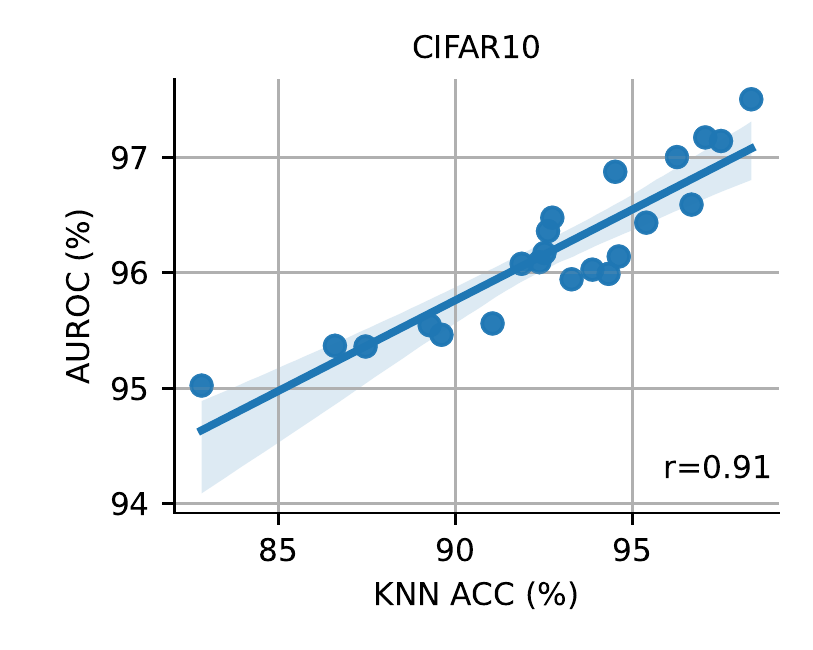}
    \end{subfigure}
    \begin{subfigure}{0.49\linewidth}
        \includegraphics[width=\linewidth]{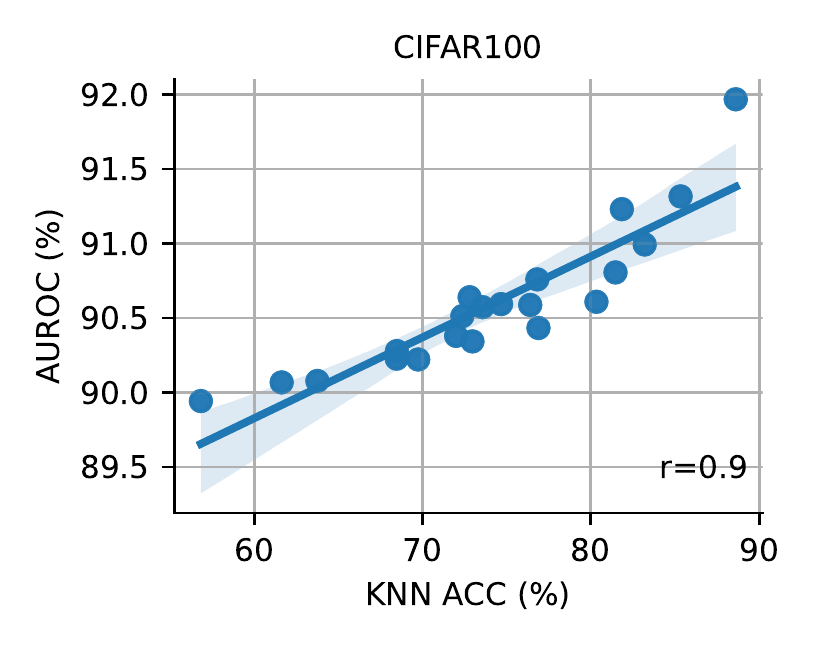}
    \end{subfigure}
    \caption{Strong correlation between failure detection performance and KNN accuracy for each pretrained model. See Figure \ref{tab:knn_acc_vs_per_all} for plots for other datasets.}
    \label{fig:knn_acc_vs_per}
\end{figure}

We further investigate the failure detection performance for each pretrained model in a particular dataset. Note that we use KNN classifier~\cite{wu2018unsupervised,caron2021emerging}, a simple weighted nearest neighbor classifier, as a performance proxy to evaluate the pretrained model's performance on the downstream task ~\cite{renggli2022model}. Figure \ref{fig:knn_acc_vs_per} and \ref{tab:knn_acc_vs_per_all} show the strong correlation between the failure detection performance and the KNN classifier performance for each pretrained model, which implies that a pretrained model with a potentially better performance in a downstream dataset can be more useful to detect failures for a model trained with this dataset.

\paragraph{Agreement Correlation Within and Between Model Families}
\begin{figure}[t]
    \centering
    \includegraphics[width=.8\linewidth]{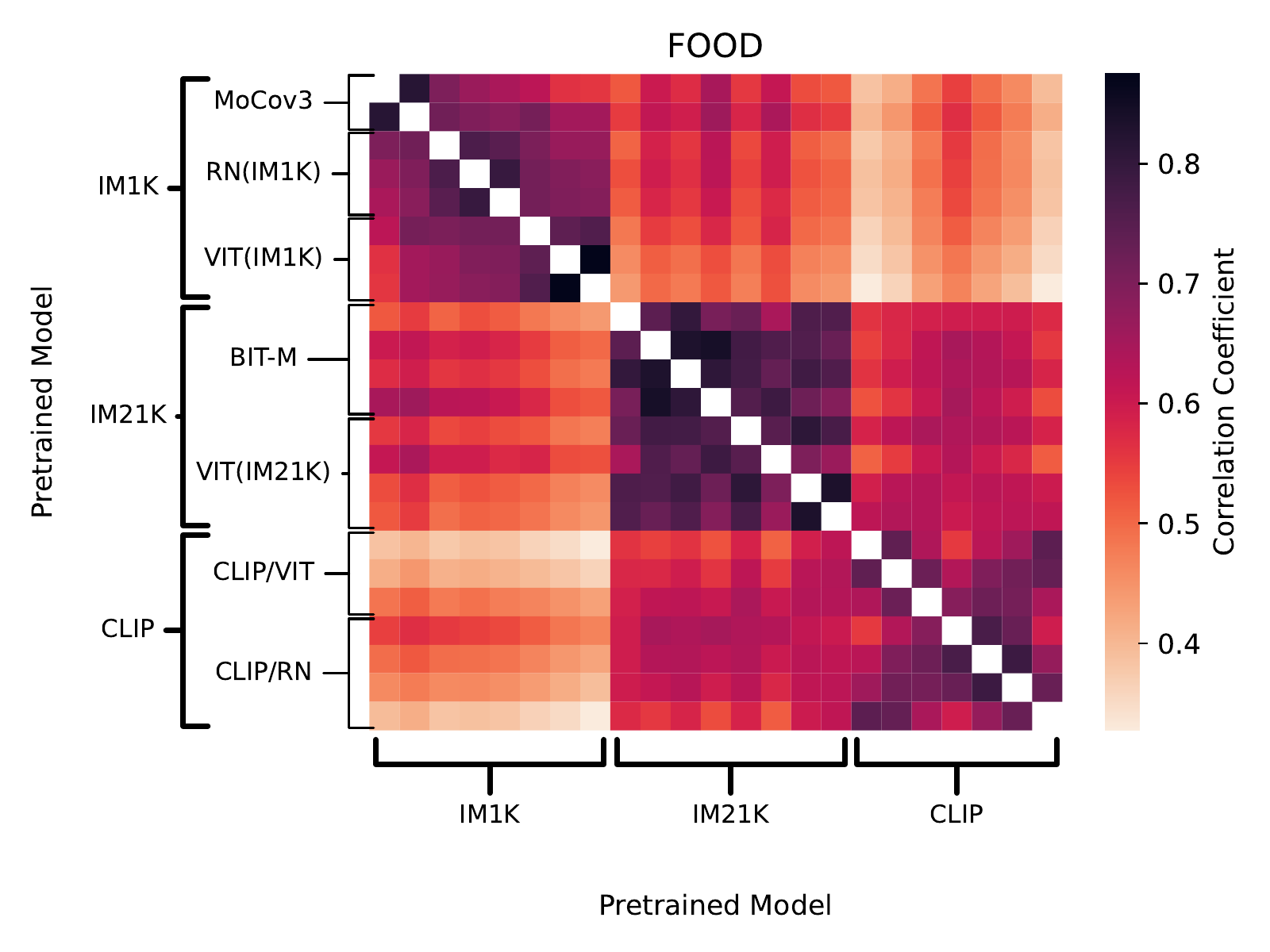}
    \caption{Pairwise correlation heatmap among any two pretrained models on FOOD dataset. Pretrained models trained with the same dataset tend to give similar information about neighborhood agreement, regardless of the architecture. IM1K: ImageNet-1K, IM21K: ImageNet-21K, indicating the pretraining dataset of the pretrained model. See Figure \ref{fig:cor_heatmap_all} for plots for each dataset.}
    \label{fig:cor_heatmap}
\end{figure}

Beyond studying the effect of each pretrained model, we further investigate when two pretrained models tend to have similar agreement on the same sample. Specifically, we compute neighbor agreement between a trained model and a pretrained model for each sample. Hence, we obtain the pairwise correlation, \ie Pearson Correlation, between any two pretrained models. Figure \ref{fig:cor_heatmap} and \ref{fig:cor_heatmap_all} display the overall correlation map and show an interesting observation: the models pretrained on the same datasets tend to give more similar information and are less affected by the architectures. This observation also implies that when pretrained models can achieve comparable performance, we should select the models that use different pre-training data to benefit from the more diverse information.

\subsection{Failure Detection in OOD}
\label{sec:perform_in_ood}
\begin{table*}[!ht]
\centering
\caption{AUROC (\%) Performance on distribution shift datasets averaged on $6$ runs. All evaluation settings follow those used in Table \ref{tab:perform_on_id}.}
\label{tab:perform_on_shift}
    \centering
    \scalebox{.9}{
    \begin{tabular}{llcccccc}
    \toprule
                  &  & CIFAR10.1       & CIFAR10-C      & CIFAR100-C         & ImageNetV2         & ImageNet-SK           \\ 
    Model          & Method                &               &               &                &                  \\ \cmidrule(l){1-7} 
     \multirow{7}{*}{CNN} 
 & MSP                              & 89.09$_{\pm 1.08} $          & 77.83$_{\pm 1.46} $          & 77.70$_{\pm 0.66} $          & 83.93$_{\pm 0.00} $          & 79.40$_{\pm 0.00} $          \\
  & Entropy                          & 89.11$_{\pm 1.12} $          & 78.03$_{\pm 1.53} $          & 78.41$_{\pm 0.71} $          & 81.63$_{\pm 0.00} $          & \textbf{80.01}$_{\pm 0.00} $ \\
 & Energy                           & 86.94$_{\pm 1.37} $          & 77.20$_{\pm 1.84} $          & 79.18$_{\pm 0.72} $          & 71.24$_{\pm 0.00} $          & 75.77$_{\pm 0.00} $          \\

 & MaxLogit                         & 87.09$_{\pm 1.35} $          & 77.40$_{\pm 1.83} $          & 79.38$_{\pm 0.72} $          & 75.37$_{\pm 0.00} $          & 77.56$_{\pm 0.00} $          \\

 & TrustScore &  91.39$_{\pm 0.19} $ &  82.01$_{\pm 0.77} $ &  80.81$_{\pm 0.57} $ &           74.91$_{\pm 1.01} $ &           74.04$_{\pm 1.26} $ \\
 & T.S.                     & 88.83$_{\pm 1.18} $          & 78.14$_{\pm 1.58} $          & 78.57$_{\pm 0.81} $          & 84.07$_{\pm 0.00} $          & 79.05$_{\pm 0.02} $          \\
 \cmidrule{2-7}

 & T.S. (w/ $\mathsf{AS}$, single)   & 95.20$_{\pm 0.71} $          & 88.96$_{\pm 1.20} $          & 80.64$_{\pm 0.78} $          & 84.11$_{\pm 0.03} $          & 79.16$_{\pm 0.06} $          \\
  & T.S. (w/ $\mathsf{AS}$, multiple) & \textbf{96.09}$_{\pm 0.65} $ & \textbf{92.76}$_{\pm 0.53} $ & \textbf{83.54}$_{\pm 0.60} $ & \textbf{84.11}$_{\pm 0.04} $ & 79.16$_{\pm 0.10} $          \\
    
    \cmidrule(l){1-7} \\

\multirow{7}{*}{ViT} 
 & MSP                              & 96.46$_{\pm 0.27} $          & 92.42$_{\pm 0.62} $          & 87.61$_{\pm 0.53} $          & 82.87$_{\pm 0.00} $          & 81.81$_{\pm 0.00} $          \\
& Entropy                          & 96.41$_{\pm 0.28} $          & 92.42$_{\pm 0.62} $          & 87.75$_{\pm 0.49} $          & 80.08$_{\pm 0.00} $          & 80.24$_{\pm 0.00} $          \\
 & Energy                           & 91.39$_{\pm 0.47} $          & 89.19$_{\pm 1.24} $          & 83.39$_{\pm 0.33} $          & 66.39$_{\pm 0.00} $          & 72.43$_{\pm 0.00} $          \\
 & MaxLogit                         & 91.69$_{\pm 0.43} $          & 89.42$_{\pm 1.23} $          & 83.96$_{\pm 0.35} $          & 73.61$_{\pm 0.00} $          & 77.05$_{\pm 0.00} $          \\

 & TrustScore &           96.08$_{\pm 0.36} $ &           92.21$_{\pm 0.77} $ &  88.00$_{\pm 0.47} $ &           81.71$_{\pm 0.43} $ &           80.31$_{\pm 0.31} $ \\
 & T.S.                     & 96.23$_{\pm 0.29} $          & 92.35$_{\pm 0.66} $          & 87.69$_{\pm 0.48} $          & 83.68$_{\pm 0.02} $          & 82.00$_{\pm 0.00} $          \\
 \cmidrule{2-7}

 & T.S. (w/ $\mathsf{AS}$, single)   & 96.28$_{\pm 0.52} $          & 92.61$_{\pm 0.52} $          & 87.87$_{\pm 0.49} $          & 84.38$_{\pm 0.07} $          & 83.00$_{\pm 0.10} $         \\
 & T.S. (w/ $\mathsf{AS}$, multiple) & \textbf{96.64}$_{\pm 0.39} $ & \textbf{93.48}$_{\pm 0.33} $ & \textbf{88.35}$_{\pm 0.48} $ & \textbf{84.77}$_{\pm 0.06} $ & \textbf{83.29}$_{\pm 0.10} $ \\
\bottomrule
    \end{tabular}}

\end{table*}

We further verify our method under distribution shifts, which simulate the challenging tasks encountered in a real-world deployment. Classifiers may produce more inaccurate predictions when dealing with unseen data, highlighting the need for failure detection for safety.

The results are shown in Table \ref{tab:perform_on_shift}, where we treat CIFAR10 and CIFAR100 as ID data and train classifiers based on these data, and test them on data distributions unseen during training: CIFAR10.1, corrupted CIFAR10 and CIFAR100. Similarly, we test on distribution shifts, ImageNetV2 and ImageNet-Sketch for ImageNet as ID data. All evaluation settings follow those used in ID data evaluation. The results show that our method can generalize well in distribution shift cases and generally outperform the baselines.

\subsection{Case Study}
\label{sec:case_study}
\begin{figure}
    \centering
    \includegraphics[width=\linewidth]{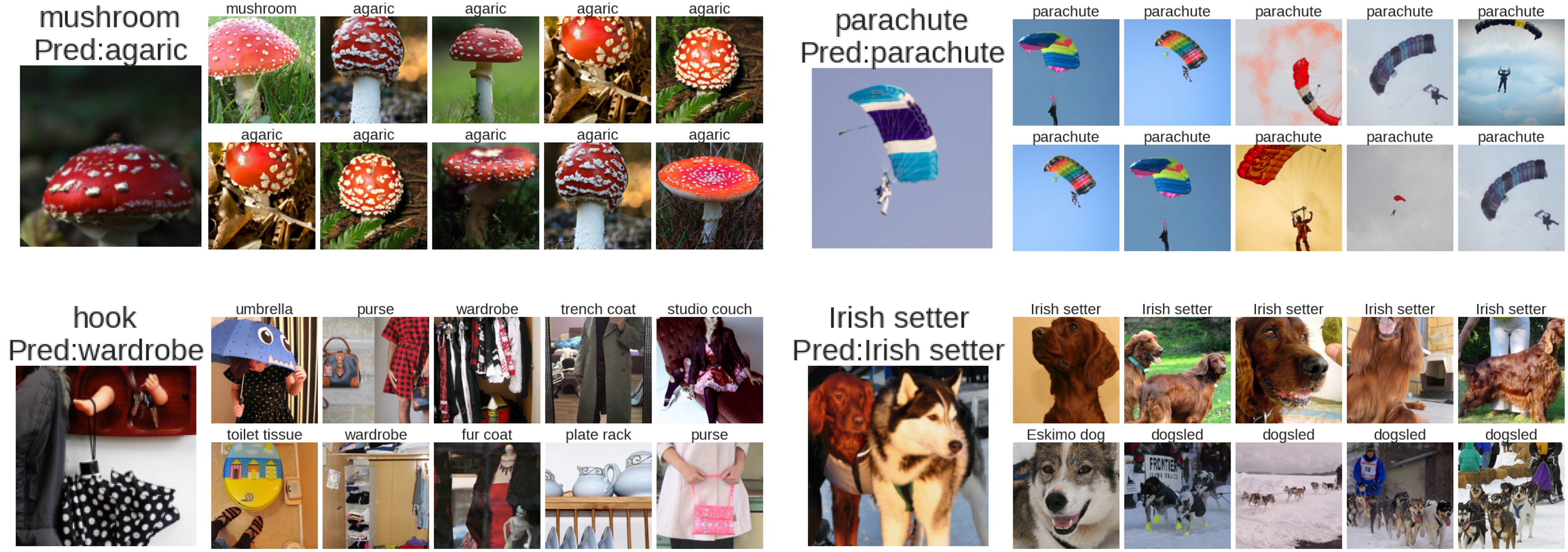}
    \caption{Four exemplar samples in ImageNet.top/bottom: samples with high/low agreement scores; left/right: failure/correct predictions. The top and bottom row of each sample box shows neighborhood samples under the trained model(fine-tuned ViT/B-16) and foundation model (\ie, CLIP ViT/L-14), respectively.  The samples with lower agreement scores tend to have multiple objects in the pictures, leading to intrinsic difficulties for the model in correctly predicting and thus disagreement between models.}
    \label{fig:case_study}
\end{figure}
\paragraph{What samples tend to get lower/higher agreement scores?}
Besides the numeric results, we also take a closer look at the samples that receive lower or higher agreement scores. Notably, as our method is based on neighborhood similarity, it enables some explanation ability by inspecting the difference between the neighborhood samples obtained from different models, especially when low agreement scores occur. We would like to highlight this as it enables human experts to further investigate the root cause of failed predictions.

Figure \ref{fig:case_study} shows the erroneous and correct samples with high and low agreement scores in ImageNet. We visualize the nearest $5$ samples under the trained classifier (fine-tuned ViT/B-16) and a foundation model (CLIP ViT/L-14). We observe that samples with low agreement scores tend to be complex or contain multiple objects, leading to intrinsic difficulties for the model in correctly predicting and thus disagreement between models; while samples with high agreement scores usually contain a single prominent object.

\subsection{Ablation Study}
\label{sec:ablation}
\paragraph{Effect of $k$ and neighbor candidate pool size $n$}

In Figure \ref{fig:ablation_KN}, we analyze the effect of k and the neighbor candidate pool size $n$ on two different datasets: CIFAR10 and CIFAR100 datasets, by varying the number of neighbors $k \in \{ 10, 20, 50, 100, 200, 500, 1000 \}$ and neighbor candidate pool size $n \in \{ 2000,5000,10000,20000,50000 \}$. It shows that the average performance is better and less sensitive to the choice $k$ with increasing pool size $n$. In Table \ref{tab:ablation_hyper}, we can see that even using a small subset of training data (e.g. $n=2000$, $<5\%$ training data from CIFAR10/CIFAR100) can already perform better than the baseline methods. Generally, the performance improves with larger pool sizes but stabilizes around an N of 10000 to 20000 for CIFAR10/CIFAR100.

\paragraph{Other choices of agreement measure}

To further study the impact of agreement measure, we replace NDCG with other similarity measures: Spearman's rank correlation coefficient, Centered Kernel Alignment (CKA)~\cite{kornblith2019similarity} and Jaccard Similarity between $k$-hop neighborhood sets. For fair comparison, we also use $k$-hop neighborhood samples to compute CKA. The linear CKA and RBF-kernel CKA report similar results in our experiments. As shown in Table ~\ref{tab:ablation_score}, the results imply the importance of neighborhoods, adaptivity to more general transformations, and ranking information, by the comparison with Spearman, CKA, and Jaccard respectively. Note that, the adaptivity to more general transformations is most important among these factors as NDCG and Jaccard outperform CKA and Spearman, which might be credited to the fact that NDCG is invariant to more general transformations compared to CKA.

\begin{figure}[]
    \centering
    \begin{subfigure}[b]{.49\linewidth}
        \centering
        \includegraphics[width=\linewidth]{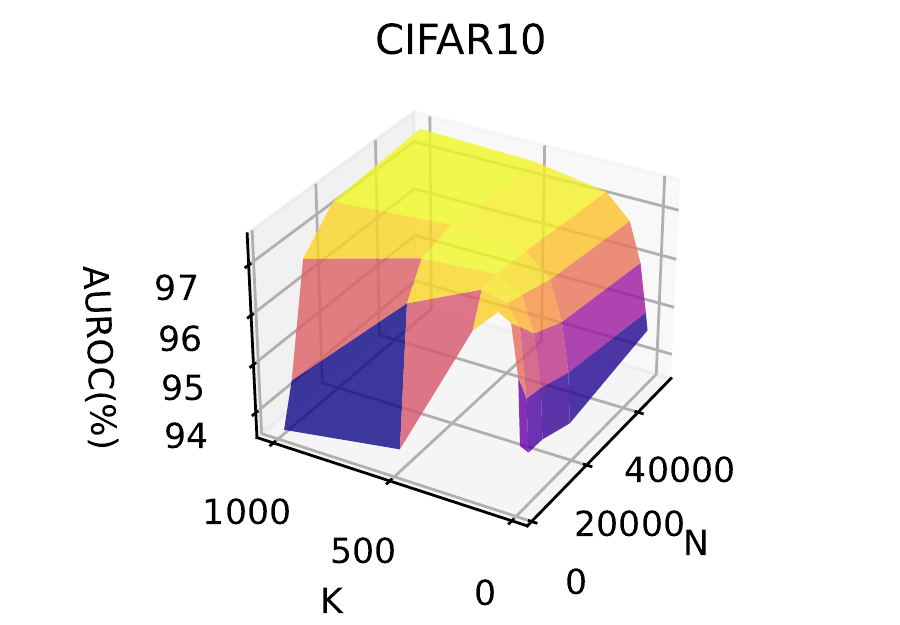}
        \label{fig:ablation_KN_cifar10}
    \end{subfigure}
    \begin{subfigure}[b]{0.49\linewidth}
        \centering
        \includegraphics[width=\linewidth]{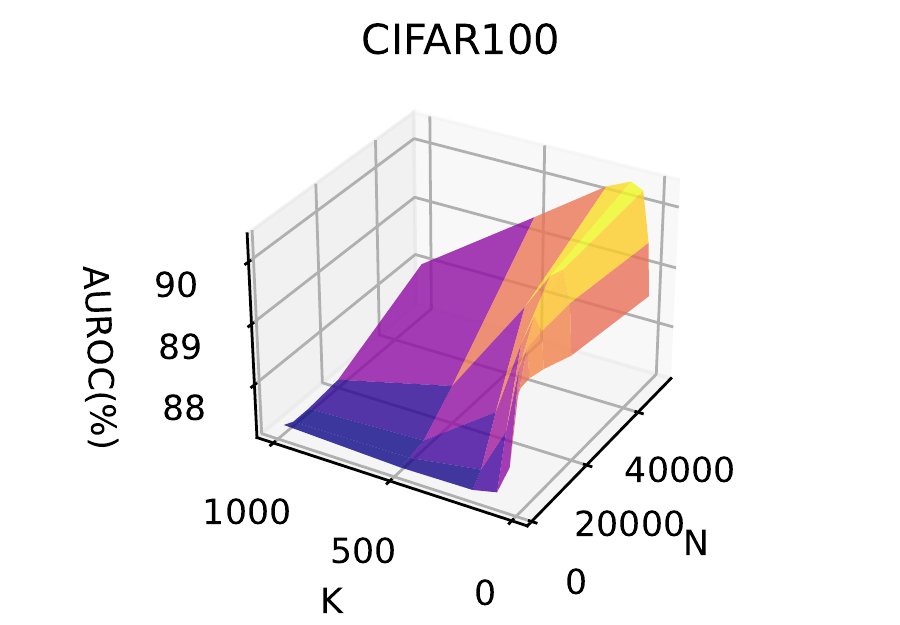}
        \label{fig:ablation_KN_cifar100}
    \end{subfigure}
    \caption{Ablation study with different $k$ and $n$ for CIFAR10 and CIFAR100. See the numeric results in Table \ref{tab:ablation_hyper}.}
    \label{fig:ablation_KN}
\end{figure}

\section{Theoretical Analysis}
\label{sec:theory}
In this section, we discuss how foundation models correlate with the reliability of a trained model prediction by latent spaces agreement and justify the use of NDCG scores as latent spaces agreement.

 \paragraph{Setup} For analysis, we discuss the cases under a regression problem. Let the training dataset contains $N$ samples, $D^\prime = \{\x^{(i)}, y^{(i)}\}_{i=1}^N$, where $ \mathbf{x}^{(i)} \in  \mathbb{R}^d$ is the $i$-th input sample and $y^{(i)}$ is the ground-truth scalar value. We can denote the predictor $f_{w,B}(x) = \mathbf{w}^\top B(\x) $, which consists of a feature extractor with normalization $B : \mathbb{R}^d \to \mathbb{R}^k$, and a weight vector $\mathbf{w} \in \mathbb{R}^{k \times 1}$. Thus, given the training dataset $D^\prime$, one can obtain a well-trained predictor $f_{w_0, B_0}(x)$ by minimizing a loss function $l$, \ie $\mathbf{w}_0, B_0 = \argmin_{\mathbf{w}, B} l(\x, y, \mathbf{w}, B)$, where the loss function can be squared loss, \etc. We use $\Vert \cdot \Vert$ as $\ell_2$ norm. Let $H: \mathbb{R}^d \to \mathbb{R}^k$ be the pretrained encoder for a foundation model. 

\subsection{Relation between Prediction Error and Latent Space Agreement}
 We denote loss function $l(\x, y, B, \w) \coloneqq \Vert \w^\top B(\x) - y \Vert $. Let $\w_0 \coloneqq \argmin_{\w} \frac{1}{n} \sum_{x}{ \left\Vert \w^{\top}B_0(x) - y \right\Vert } $. We assume there exists an isometric transformation $U_h \in \mathcal{U} $, where  $\mathcal{U}$ contains all possible isometric transformations and $U_h \coloneqq \argmin_{U} \mathbb{E} \left\Vert B_0(X) - U H(X)  \right\Vert $. We assume that there exists a head $\w_h \in \mathbb{R}^{k \times 1}$, where $ \left\Vert \w_h^\top U_h^{-1} - \w_0^\top \right\Vert = \left\Vert \Delta \right\Vert \leq C $ . We use normalized features, \ie $\left\Vert B_0(\x) \right\Vert = 1$.
\begin{proposition}
Given a test sample $\x$ and its ground-truth value $y$, the trained encoder $B_0$ and its linear head $\w_0$, we have a pretrained encoder $H$ from a foundation model. If the pretrained encoder predicts correctly: $ l(\x, y, H, \w_h) = 0 $, the prediction error $l(\x, y, B_0, \w_0) \leq  ( C + \left\Vert \w_0 \right\Vert) \cdot \left\Vert B_0(\x) - U_h H(\x) \right\Vert + C$.
\end{proposition}
\begin{proof}%
The detailed proof is relegated to Appendix \ref{appendix:proof_test_error}.
\end{proof}

In summary, that is if two latent spaces highly agree on a sample $\x$, \ie $\left\Vert B_0(\x) - U_h H(\x) \right\Vert$ close to $0$, the model is more likely to predict accurately on $\x$. However, measuring this latent space agreement $\left\Vert B_0(\x) - U_h H(\x) \right\Vert$ can be challenging as there exists an unknown isometric transformation $U_h$.

\subsection{ Local Approximation Isometry and NDCG }
Next, we show that NDCG provides an effective measure of similarity between latent spaces irrespective of rotations or distortion. To show this, we first define a transformation $f$ which approximately preserves distances around $x$, as a  $\delta$-Local Approximation Isometry:
\begin{assumption}
($\delta$-Local Approximation Isometry). $\forall \z \in \mathcal{N}_k(\x), \exists \delta \geq 1, \frac{ \left\Vert f(\z) - f(\x) \right\Vert }{ \left\Vert \z - \x \right\Vert } \in \left( \frac{1}{\delta}, \delta \right)$.
\end{assumption}
Intuitively, $\delta$ describes how `approximately' the distances are preserved by $f$. For example, when $\delta = 1$, all the samples around $x$ are strictly distance-preserving, so the neighborhood and ranking of neighborhood samples do not change. As $\delta$ increases, the neighborhood after applying $f$ differs more from the original neighborhood, as does the ranking of neighborhood samples. 
\begin{proposition}
(Lower Bound of NDCG scores).
Given an input sample $\x$, $\Pi^*$ and $\Pi^\prime$ are permutations before and after a $\delta$-local approximation isometric transformation $f$, we have $\mathsf{NDCG}( \Pi^*, \Pi^\prime, r ) \geq \frac{1}{\delta^2}$, when $r = 1/d(\cdot, \x)$ and $d$ is a distance scoring function.
\end{proposition}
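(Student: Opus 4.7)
The plan is to prove the bound $\mathsf{NDCG}\geq 1/\delta^2$ \emph{termwise}: for every position $i$ one shows $r(\Pi^\prime_{(i)})\geq r(\Pi^*_{(i)})/\delta^2$, after which the constant $1/\delta^2$ factors directly out of the ratio defining NDCG. To do this, I first unpack the $\delta$-local approximation isometry into per-sample distance distortion bounds. Writing $d(\z)\coloneqq\Vert\z-\x\Vert$ and $d^\prime(\z)\coloneqq\Vert f(\z)-f(\x)\Vert$, the assumption gives $d(\z)/\delta < d^\prime(\z) < \delta\, d(\z)$ for every $\z$ in the considered neighborhood of $\x$.

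The main step is the following rank-transfer claim: whenever $\z$ sits at position $i$ of the transformed ranking $\Pi^\prime$, we must have $d(\z)\leq \delta^2\, d(\Pi^*_{(i)})$. I would prove this by contradiction. Suppose $d(\z) > \delta^2\, d(\Pi^*_{(i)})$. Then
\[
d^\prime(\z) \;>\; d(\z)/\delta \;>\; \delta\cdot d(\Pi^*_{(i)}).
\]
On the other hand, each of $\Pi^*_{(1)},\dots,\Pi^*_{(i)}$ has original distance at most $d(\Pi^*_{(i)})$ and hence transformed distance strictly less than $\delta\cdot d(\Pi^*_{(i)}) < d^\prime(\z)$. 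Therefore at least $i$ distinct samples precede $\z$ in $\Pi^\prime$, contradicting the fact that $\z$ occupies position $i$. (The edge case $\z=\Pi^*_{(j)}$ with $j\leq i$ gives $d(\z)\leq d(\Pi^*_{(i)})$ immediately.) Converting via $r=1/d(\cdot,\x)$ yields the desired $r(\Pi^\prime_{(i)})\geq r(\Pi^*_{(i)})/\delta^2$.

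The last step is a one-line substitution into the NDCG definition:
\[
\mathsf{NDCG}(\Pi^*,\Pi^\prime,r)\;=\;\frac{\sum_i r(\Pi^\prime_{(i)})/\log(i+1)}{\sum_i r(\Pi^*_{(i)})/\log(i+1)}\;\geq\;\frac{(1/\delta^2)\sum_i r(\Pi^*_{(i)})/\log(i+1)}{\sum_i r(\Pi^*_{(i)})/\log(i+1)}\;=\;\frac{1}{\delta^2}.
\]

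The hard part is the pigeonhole-style ordering argument in the middle step: I must couple the lower bound $d^\prime(\z)>d(\z)/\delta$ with the upper bound on the $d^\prime$-values of the true top-$i$ neighbors in exactly the right way to force a contradiction. A secondary subtlety is that the $\delta$-approximation isometry is only assumed on $\mathcal{N}_k(\x)$, so strictly speaking the termwise bound is safe only when the samples involved lie in that neighborhood. I would therefore either restrict the NDCG sum to indices within the $k$-neighborhood (consistent with the importance scoring convention used earlier in the paper) or explicitly assume the isometry property extends globally to the ranked samples; in either case the above argument goes through unchanged.
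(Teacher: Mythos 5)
Your proof is correct, but it takes a genuinely different route from the paper's. The paper converts the score $r(\cdot)=1/d(\cdot,\x)$ into the post-transformation score $r^\prime(\cdot)=1/d(f(\cdot),f(\x))$, paying a factor $1/\delta$ in the numerator and $\delta$ in the denominator, and then observes that the resulting ratio of sums is $\geq 1$ by the Rearrangement Inequality (the numerator pairs the sorted $r^\prime$ values with the sorted discounts $1/\log(i+1)$, while the denominator pairs the same multiset of $r^\prime$ values in a possibly worse order). You instead prove the stronger \emph{termwise} statement $r(\Pi^\prime_{(i)})\geq r(\Pi^*_{(i)})/\delta^2$ via a rank-transfer lemma, established by a pigeonhole argument: if the $i$-th sample of $\Pi^\prime$ had original distance exceeding $\delta^2 d(\x_{\Pi^*_{(i)}},\x)$, then all of $\Pi^*_{(1)},\dots,\Pi^*_{(i)}$ would have strictly smaller transformed distance and would precede it, a contradiction. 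That lemma is correct (the strictness of the inequalities in the $\delta$-isometry assumption handles ties, and your edge case disposes of the situation where the sample is itself among the true top $i$). The trade-off: the paper's argument is shorter once the rearrangement step is seen, but it relies on the discount weights $1/\log(i+1)$ being non-increasing; your termwise bound is position-by-position, so it immediately yields the same $1/\delta^2$ bound for arbitrary nonnegative discounts and for truncated (top-$k$) variants of NDCG. Your closing caveat about the distortion bound only being assumed on $\mathcal{N}_k(\x)$ applies equally to the paper's own proof, which silently applies the bound to all $n$ ranked samples, so it is not a defect of your argument relative to theirs.
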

\begin{proof}
    The detailed proof is relegated to Appendix \ref{appendix:proof_ndcg_bound}.
\end{proof}

This shows that if a local approximate isometry exists near a point $\x$, the NDCG is guaranteed to be high $(\ge 1/\delta^2)$. As $\delta$ approaches $1$, the NDCG also approaches $1$.

\section{Conclusions}
\label{sec:concl}
While powerful foundation models have received increasing attention, their use in improving model reliability is still underexplored due to the challenges of incompatible latent spaces between foundation models and a trained classifier. In this paper, we proposed a novel inter-model latent agreement framework to overcome this incompatible issue and improve the reliability of a trained classifier without any fine-tuning. We first show the agreement correlates well with classification accuracy. Motivated by this, our framework enables incorporating the agreement score into predictive confidence to improve failure detection performance. We conduct extensive experiments on failure detection to verify the benefits of our framework to improve model reliability and provide theoretical justification for our method. We believe our proposed neighborhood agreement measure between latent spaces can further benefit the study of the interconnection between different models.

\section*{Acknowledgements}
This research is supported by the National Research Foundation Singapore under its AI Singapore Programme (Award Number: [AISG2-TC-2021-002]).

\bibliographystyle{icml2023}

\newpage
\onecolumn

\appendix
\section{Experimental Setup}
\label{appendix:exp_setup}

\subsection{Datasets}

We run experiments on six in-distribution datasets and five distribution shifts to evaluate the failure detection performance. For in-distribution, we use CIFAR10~\cite{cifar10}, CIFAR100, STL~\cite{coates2011analysis}, BIRDS~\cite{WahCUB_200_2011}, FOOD~\cite{bossard14} and a large-scale dataset, ImageNet (ImageNet-1K)~\cite{deng2009imagenet}. For distribution shifts, we use CIFAR10.1~\cite{recht2018cifar10.1}, Gaussian Blur Corrupted CIFAR10 samples (CIFAR10-C)~\cite{hendrycks2019robustness} as natural and corruption distribution shift for CIFAR10. We use Gaussian Blur Corrupted CIFAR100 samples (CIFAR100-C) as corruption distribution shift for CIFAR100. For ImageNet, we fine-tune on ImageNet and evaluate failure detection on distribution shifts: ImageNetV2~\cite{recht2019imagenet} and ImageNet-Sketch (ImageNet-SK)~\cite{wang2019learning}. See details about datasets and split settings in Table \ref{appendix:dataset_table}.

\begin{table*}[h]
    \centering
    \caption{Number of images per data set and associated splits}
    \label{appendix:dataset_table}
    \begin{tabular}{lrrrrr}
    \toprule
        Datasets & Classes & Train Size & Val. Size & Test Size & Unlabeled Set Size \\
            \midrule
         CIFAR10 & 10 & 50000 & 1000 & 9000 & - \\
         CIFAR100 & 100 & 50000 & 1000 & 9000 & - \\
         BIRDS & 200 &  5994 & 2897 & 2897 & - \\
         STL & 10 & 5000 & 4000 & 4000 & 100000 \\
         FOOD & 102 & 75750 & 12625 & 12625 & - \\
         ImageNet & 1000 & 1281167 & 10000 & 40000 & - \\
    \midrule
        CIFAR10-C & 10 & - & - & 10000 & - \\
        CIFAR100-C & 10 & - & - & 10000 & - \\
        CIFAR10.1 & 10 & - & - & 2000 & - \\
        ImageNetV2 & 1000 & - & - & 10000 & - \\
        ImageNet-Sketch & 1000 & - & - & 50000 & - \\
    \bottomrule
    \end{tabular}
\end{table*}

\subsection{Base Models}

We consider two common architectures: CNN-base (ResNet-50) and ViT-base (ViT-B/16) models as base classifiers across all datasets. To see if our method can still outperform on "pretrained and fine-tuned" models, all our base classifiers are initialized with a larger-scale data pretrained model and then fine-tuned. For all datasets except ImageNet, our base classifiers are trained with initializing with ResNet-50 architecture pretrained on ImageNet-1K examples and ViT/B-16 model pretrained on ImageNet-21K examples. For ImageNet, we use public fine-tuned models from PyTorch Image Models~\cite{rw2019timm}, which are ImageNet-21K pretrained ResNetV2-50 model and CLIP pretrained ViT/B-16 model. We use penultimate layer output as the encoding feature vectors for the trained models.

\paragraph{Model Architectures and pretraining source}
For all datasets except for ImageNet, our base models are trained with initializing with ResNet-50 model pretrained on ImageNet-1K examples and ViT/B-16 model pretrained on ImageNet-21K examples. For ImageNet, we use public fine-tuned models from TIMM ~\cite{rw2019timm}, which are fine-tuned on ImageNet based on CLIP ViT/B-16 model.

\paragraph{Training Receipt}
For ResNet-50 models, we fine-tune with Adam optimizer with learning rate $1e-4$ and $(\beta_1, \beta_2) = (0.9, 0.99)$. For CIFAR10, CIFAR100, STL and BIRDS, we fine-tune for $50$ epochs. For FOOD, we fine-tune for $20$ epochs. We use the public trained ImageNet classifier from ~\cite{rw2019timm}. 

For ViT, we fine-tuned with cosine annealing scheduler. The detail is shown in Table ~\ref{appendix:vit_train}.
\begin{table*}[h]
    \centering
    \caption{Training parameters per data set for ViT. init-lr: Initial learning rate of the cosine annealing scheduler as selected. steps: Number of batches that was trained on. }
    \label{appendix:vit_train}
    \begin{tabular}{lrrrr}
    \toprule
        Datasets & init-lr  & batch size & steps \\
            \midrule
         CIFAR10 & 3e-4  & 64 & 15000 \\
         CIFAR100 & 3e-4  & 64 & 15000 \\
         BIRDS & 3e-4 &  64 & 5000 \\
         STL & 3e-4 &  64 & 4000 \\
         FOOD & 3e-4 & 32 & 47000 \\
    \bottomrule
    \end{tabular}
\end{table*}

\paragraph{Base Models Performance}
\begin{table*}
    \centering
    \caption{Average classification accuracy (\%) of trained classifiers used in our paper.}
    \label{tab:classifer_acc}
    \begin{tabular}{c|cccccc}
        \toprule
         Model &  CIFAR10 & CIFAR100 & STL & BIRDS & FOOD & ImageNet  \\
         \midrule
         CNN &  97.36 &  85.04 & 97.79 & 77.68 & 81.53 & 80.31 \\
         ViT &  99.09 & 93.47 &  99.33 &  84.76 &  90.60 &  85.24 \\
         \bottomrule
    \end{tabular}
\end{table*}

For sanity check of trained classifiers, we show the average classification accuracy of our trained models used in this paper in Table ~\ref{tab:classifer_acc}.

\subsection{Foundation/Pretrained Models}

In this work, we have included $23$ public pretrained models of diverse architectures, training data and optimization losses. For specific, these models can be categorized into $5$ model families as: CLIP ViT/ResNet~\cite{radford2021learning}, ViT (pretrained on ImageNet-21K or ImageNet-1K)~\cite{dosovitskiy2020image,steinerhow}, BiT-M~\cite{kolesnikov2020big}, ResNet~\cite{he2016deep} and MoCov3~\cite{Chen_2021_ICCV}. Among the candidate models, the model with best fine-tune ImageNet accuracy is CLIP ViT/L-14 (87.85\%) and the second best is ViT/L-16 (ImageNet-21K) (87.08\%)\footnote{as reported in ~\cite{rw2019timm}.}. We use penultimate layer output as the encoding feature vectors for the pretrained models. Except for multi-modal foundation model CLIP, we use the image encoders. We introduce the model families as follows:
\begin{itemize}
    \item \textbf{CLIP-RN/VIT} We include four ResNet-based contrastive CLIP models (ResNet-50, ResNet-101, ResNet50x4, ResNet50x64) and three ViT-based CLIP models (ViT/B-32, ViT/B-16, ViT/L-14).
    \item \textbf{Vision Transformer (ViT)} We include ViT models pretrained on ImageNet-1K~\cite{steinerhow,dosovitskiy2020image}(ViT/S-16, ViT/B-16, ViT/B-16@384px) and ImageNet-21K(ViT/T-16, ViT/S-16, ViT/B-16, ViT/L-16).
    \item \textbf{BIT-M} We use four ResNetv2-based model pretrained in ImageNet21K: ResNetv2-50, ResNetv2-50x3, ResNetv2-101, RseNetv2-152x4.
    \item \textbf{ResNet} We use three ResNet models prerained in ImageNet-1K: ResNet-50, ResNet-101, ResNet-152.
    \item \textbf{MoCov3} We include the self-supervised pretrained models with ResNet-50 and ViT/B-16 architectures.
\end{itemize}

\subsection{Method Implementation }
\paragraph{Hyperparameters}
We have training set size $n$ and  neighborhood size $k$ as hyperparameters. For main results, except for the ablation study, we use $n=10000$ across all datasets, except for BIRDS with $5994$ training samples in total. The candidate pool is sampled from the training set except for STL, for which we sample from the unlabeled set. We select $k \in \{10, 20, 50, 100, 200, 500, 1000\}$ with optimal AUROC performance on validation split for each dataset. See Table ~\ref{appendix:hyper}.

\begin{table*}[h]
    \centering
    \caption{Hyperparameters used in each dataset. Source: the data source of neighborhood samples. As STL has unlabeled samples, we sample from its unlabeled split as neighborhood sample sets. Evaluation in distribution shifts use the same setting as the corresponding ID.}
    \label{appendix:hyper}
    \begin{tabular}{llrr}
    \toprule
        Datasets & Source & $n$ & $k$  \\
            \midrule
         CIFAR10 & train & 10000 & 200 \\
         CIFAR100 & train & 10000 & 20 \\
         BIRDS & train &  5994 & 10 \\
         STL & unlabeled & 10000 & 200  \\
         FOOD & train & 10000  & 20 \\
         ImageNet & train & 10000 & 10 \\
    \bottomrule
    \end{tabular}
\end{table*}

we extract the features with different pretrained encoders and save for the later test stage. For feature extracting, we only require one-pass inference cost on training sample set, which is low computational compared to fully fine-tuning or adapting.

\subsection{Baselines}
Our baseline methods include the Maximum Softmax Probability (MSP)~\cite{hendrycks17baseline}, other uncertainty measure from the trained model: Entropy and Energy~\cite{lakshminarayanan2017simple,liu2020energy}, the distance based measure: TrustScore~\cite{jiang2018trust}, MaxLogit proposed for distribution shift ~\cite{vaze2022openset} and the vanilla Temperature Scaling (T.S.)~\cite{guo2017calibration}.

\subsection{Evaluation Metrics}

Following the evaluation in ~\cite{hendrycks17baseline}, we treat success/error prediction as positive and negative respectively, and use 
the area under the receiver operating characteristic curve (AUROC) as evaluation metric, with a bigger value indicating a more accurate failure detection.

\section{Framework}

\begin{algorithm}
\caption{Inter-model Latent Agreement}\label{alg:framework}
\begin{algorithmic}
    \State \textbf{Input:} 
    Training dataset $D_{tr}$, 
     validation dataset $D_{val}$, the encoder $B$ from a trained classifier, the pretrained encoders $ \mathcal{H} = \{ H_1, \dots , H_m \}$ from $m$ foundation models, test sample $\x^{\rm test}$ 
    \State \textbf{Output:} the adjusted probabilities: $\tilde{P}(Y \mid \x^{\rm test} )$ 
    
    \State Collect feature vectors with encoders $B$ and $H_1, \dots, H_m$ based on $D_{tr}$ as $\mathbb{Z}$ and $\mathbb{H}^1, \dots, \mathbb{H}^m$. \Comment{Eq\eqref{eq:z_vec}\eqref{eq:h_vec}}
    \State \textbf{Calibration:}
        \State For $\x$ in $D_{val}$, we compute $\mathsf{AS}(\x, B, \mathcal{H})$ \Comment{Eq\eqref{eq:agreement_score}} \\
        Obtain $\tau^*(\x)$ by minimizing the NLL loss on $D_{val}$  \Comment{Eq\eqref{eq:sample_temp}\eqref{eq:TS}}
    \State \textbf{Test Stage:}
        \State Given a test sample $\x^{\rm test}$, we compute $\mathsf{AS}(\x^{\rm test}, B, \mathcal{H})$ \Comment{Eq\eqref{eq:agreement_score}}
    \State \textbf{Return:} the adjusted probabilities: $\tilde{P}(Y \mid \x^{\rm test} )$ with $\tau^*(\x^{\rm test})$  \Comment{Eq\eqref{eq:TS}}
        
\end{algorithmic}
\end{algorithm}

\section{Failure Detection Results}

\subsection{Ablation Study}

We show the numeric results of ablation study on hyperparameters and choices of agreement measures in Table ~\ref{tab:ablation_hyper} and \ref{tab:ablation_score}, respectively.

\begin{table}[h]
    \centering
    \caption{Ablation study on different $k$ and $n$}
    \label{tab:ablation_hyper}
    \scalebox{.9}{
    \begin{tabular}{|l|l|l|l|l|l|l|l|}
    \toprule
        ~ & MSP & T.S. & N=2000 & N=5000 & N=10000 & N=20000 & N=50000 \\ 
        \midrule
        CIFAR10 & 94.15 & 93.76 & 97.26 & 97.40 & 97.45 & 97.53 & 97.50 \\ 
        CIFAR100 & 88.42 & 87.21 & 89.08 & 89.58 & 89.63 & 90.16 & 90.38 \\ 
        \bottomrule
    \end{tabular}}
\end{table}

We can see that even using a small subset of training data (e.g. N=2000, $<5\%$ training data from CIFAR10/CIFAR100) can already perform better than the baseline methods. The numeric results show that our method works well without a large amount of training data as the pool. Generally, the performance improves with larger pool sizes but stabilizes around an N of 10000 to 20000. Note that, with varying $n$, we select $k \in \{10, 20, 50, 100, 200, 500, 1000\}$ with optimal AUROC performance on validation split, which is following our main result experimental protocol.

\begin{table}[h]
    \centering
    \caption{Ablation study on different choices of agreement measures. The performance is averaged over all ID datasets. }
    \label{tab:ablation_score}
    \scalebox{.9}{
    \begin{tabular}{lr}
        \toprule
         Method & Avg AUROC(\%)\\
         \midrule
         MSP & 91.11\\
         T.S. & 90.98\\
         T.S. (w/ Spearmanr, single) & 90.87\\
         T.S. (w/ CKA, single) & 90.99\\
         T.S. (w/ Jaccard, single) & 92.59\\
         T.S. (w/ $\mathsf{AS}$, single) & \textbf{92.67} \\
         \bottomrule
    \end{tabular}}
\end{table}

We replace NDCG with other similarity measures: Spearman's rank correlation coefficient, Centered Kernel Alignment (CKA)~\cite{kornblith2019similarity} and Jaccard Similarity between $k$-hop neighborhood sets. For fair comparison, we also use $k$-hop neighborhood samples to compute CKA. The linear CKA and RBF-kernel CKA report similar results in our experiments. In Table ~\ref{tab:ablation_score}, the results imply the importance of neighborhoods, adaptivity to more general transformations, and ranking information, by the comparison with Spearman, CKA, and Jaccard respectively. Note that, the adaptivity to more general transformations is most important among these factors as NDCG and Jaccard outperform CKA and Spearman, which might be credited to the fact that NDCG is invariant to more general transformations compared to CKA.

\subsection{Exploration Study}
\label{appendix:fail_detect}

Note that, the exploration study about KNN Accuracy and Correlation heatmap is conducted on CNN-based trained classifiers across different datasets.

\begin{figure*}[h]
    \centering
    \begin{subfigure}{0.33\textwidth}
        \includegraphics[width=\textwidth]{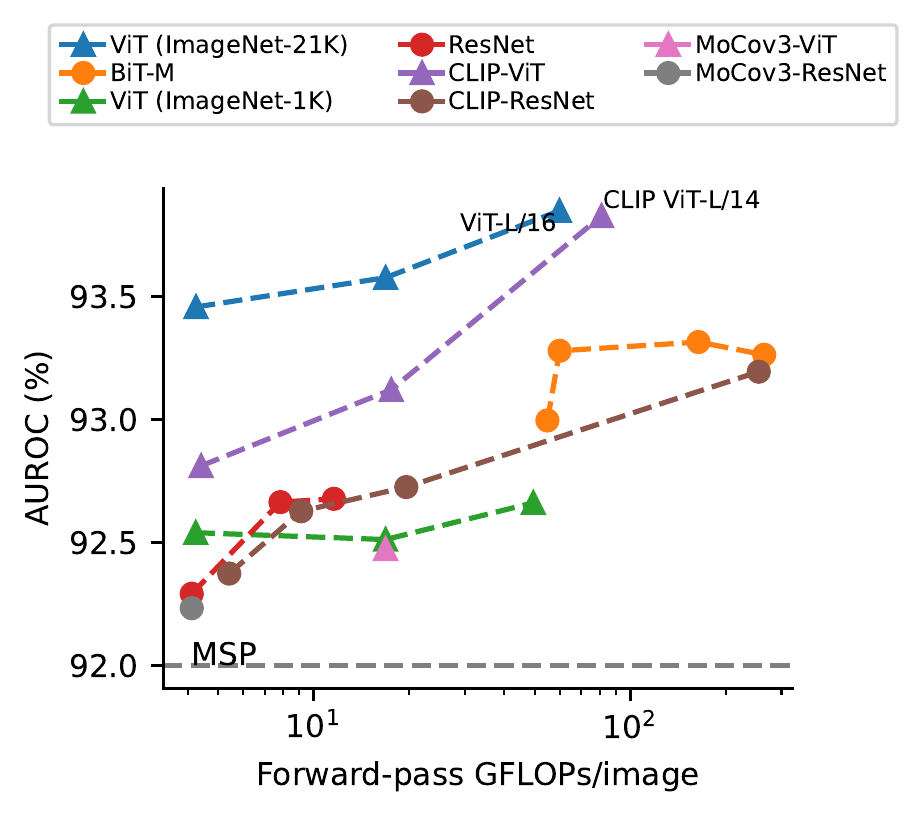}
        \caption{Average over CNN and ViT}
    \end{subfigure}
    \begin{subfigure}{0.33\textwidth}
        \includegraphics[width=\textwidth]{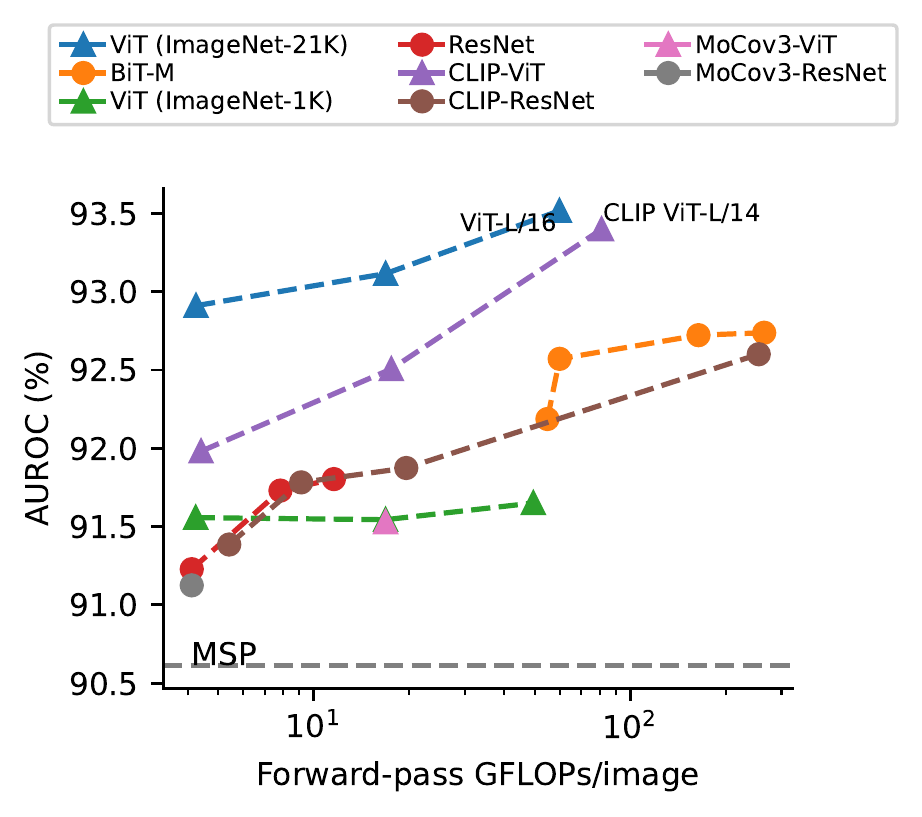}
        \caption{CNN Classifiers}
    \end{subfigure}
    \begin{subfigure}{0.33\textwidth}
        \includegraphics[width=\textwidth]{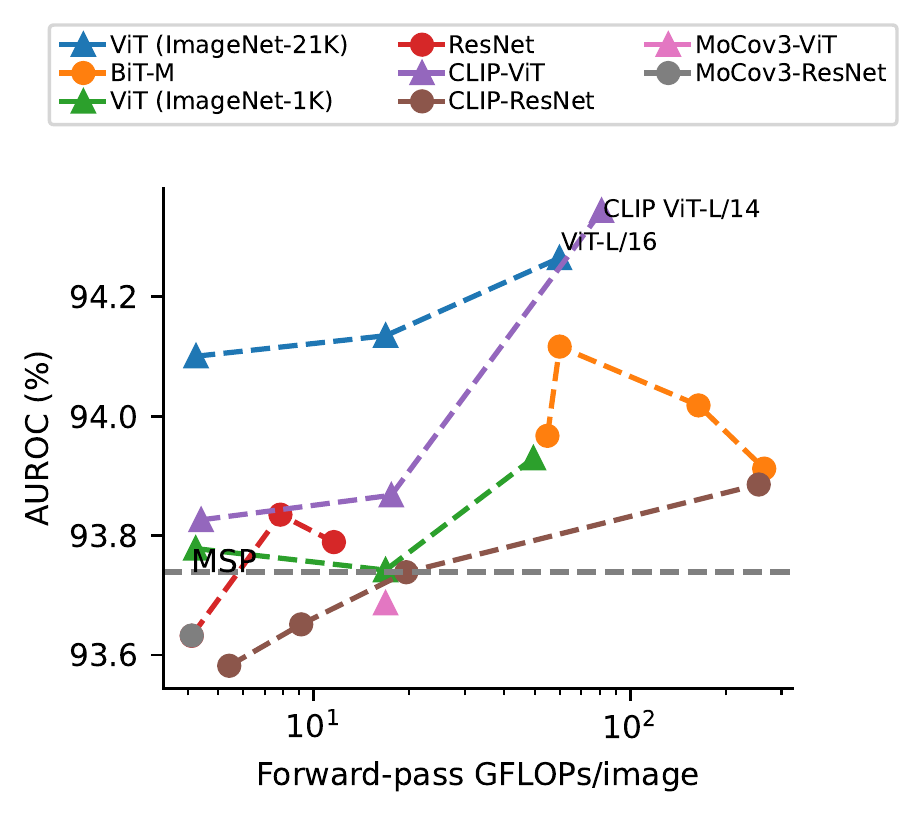}
        \caption{ViT Classifiers}
    \end{subfigure}
    \caption{Failure detection performance AUROC(\%) for each pretrained model average over all ID datasets (ImageNet excluded). x-axis: inference GPU cost. }
    \label{fig:flops_vs_perform_series}
\end{figure*}

\begin{figure*}
    \centering
    \begin{subfigure}{0.33\textwidth}
        \includegraphics[width=\textwidth]{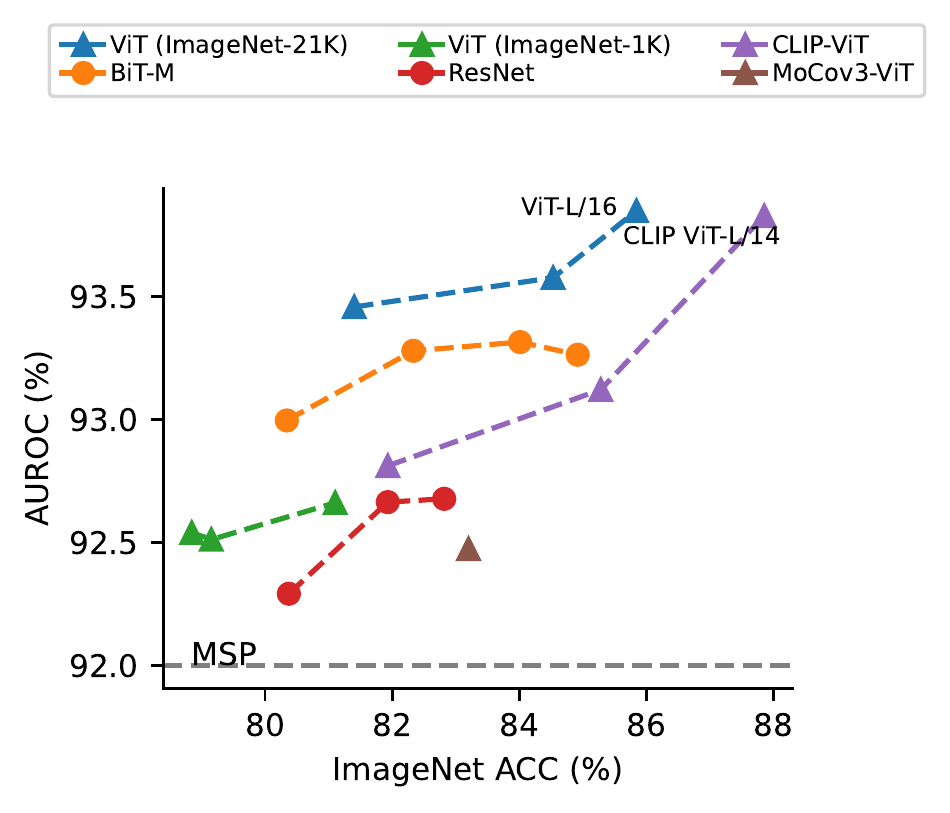}
        \caption{Average over CNN and ViT}
    \end{subfigure}
    \begin{subfigure}{0.33\textwidth}
        \includegraphics[width=\textwidth]{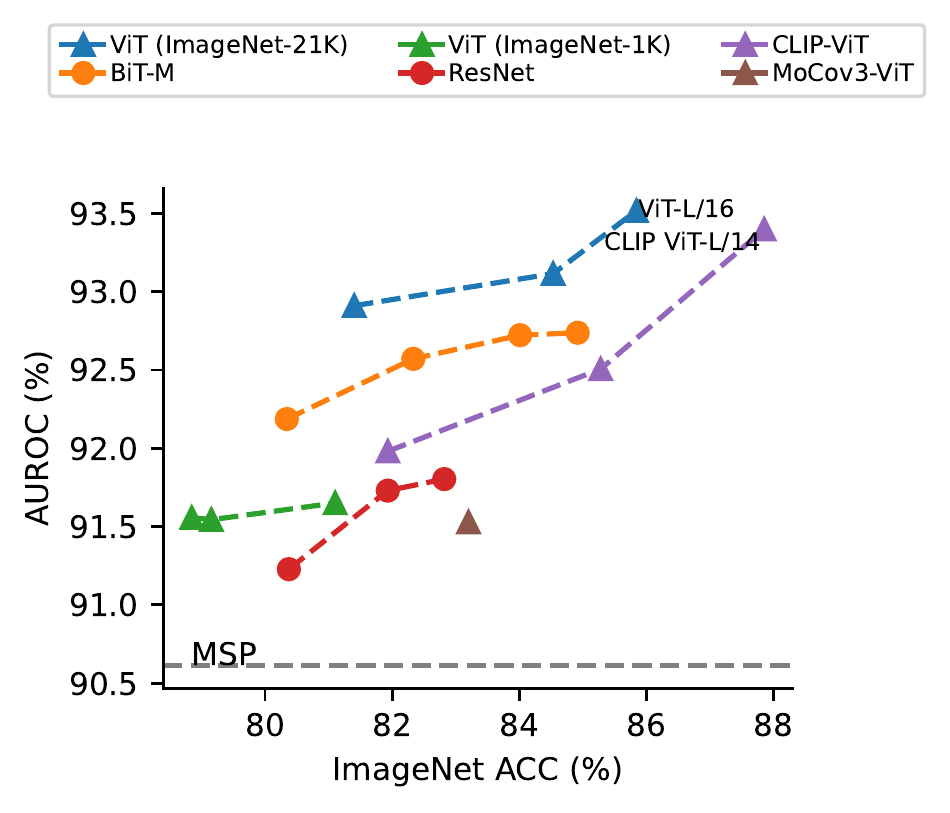}
        \caption{CNN Classifiers}
    \end{subfigure}
    \begin{subfigure}{0.33\textwidth}
        \includegraphics[width=\textwidth]{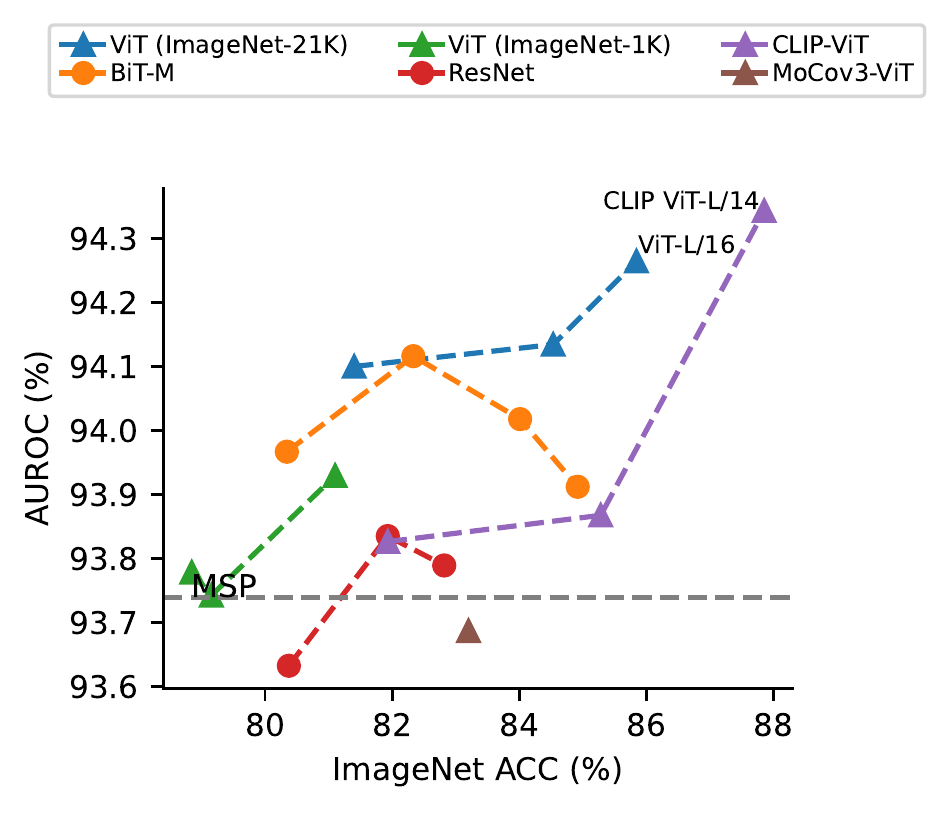}
        \caption{ViT Classifiers}
    \end{subfigure}
    \caption{Failure detection performance AUROC(\%) for each pretrained model average over all ID datasets (ImageNet excluded). x-axis: finetune ImageNet accuracy (\%) of pretrained model.}
    \label{fig:imagenet_acc_vs_perform_series}
\end{figure*}

\begin{figure*}
    \centering
    \begin{subfigure}{0.33\textwidth}
        \includegraphics[width=\textwidth]{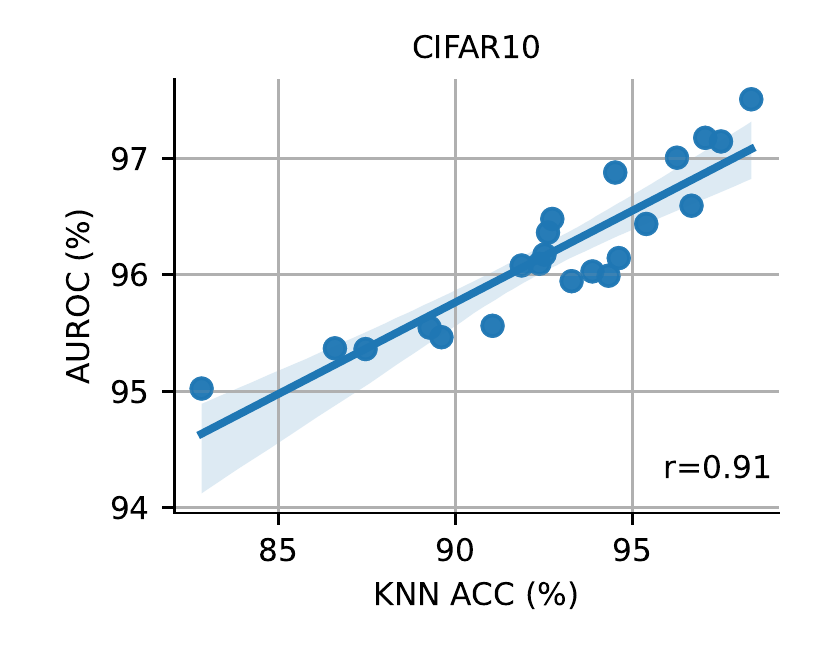}
    \end{subfigure}
    \begin{subfigure}{0.33\textwidth}
        \includegraphics[width=\textwidth]{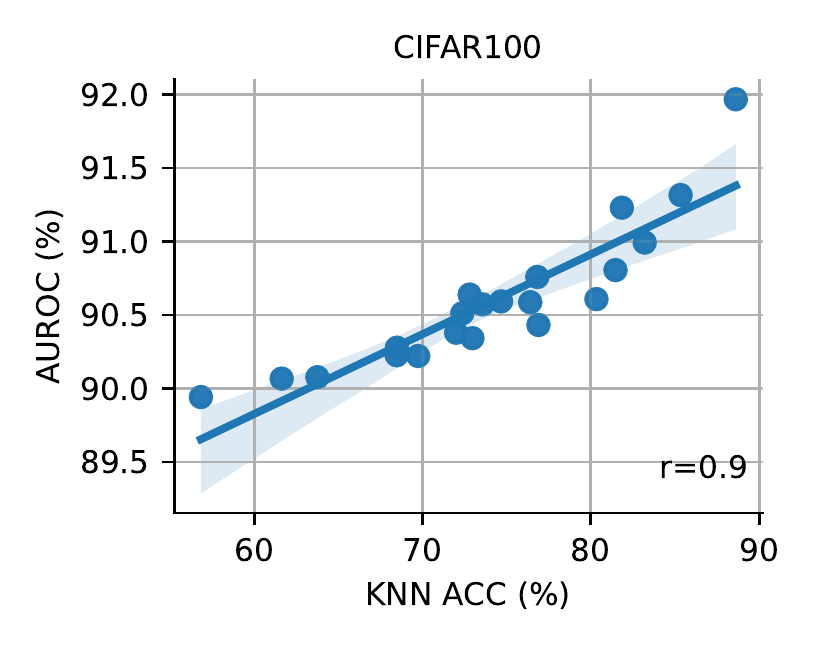}
    \end{subfigure}
    \begin{subfigure}{0.33\textwidth}
        \includegraphics[width=\textwidth]{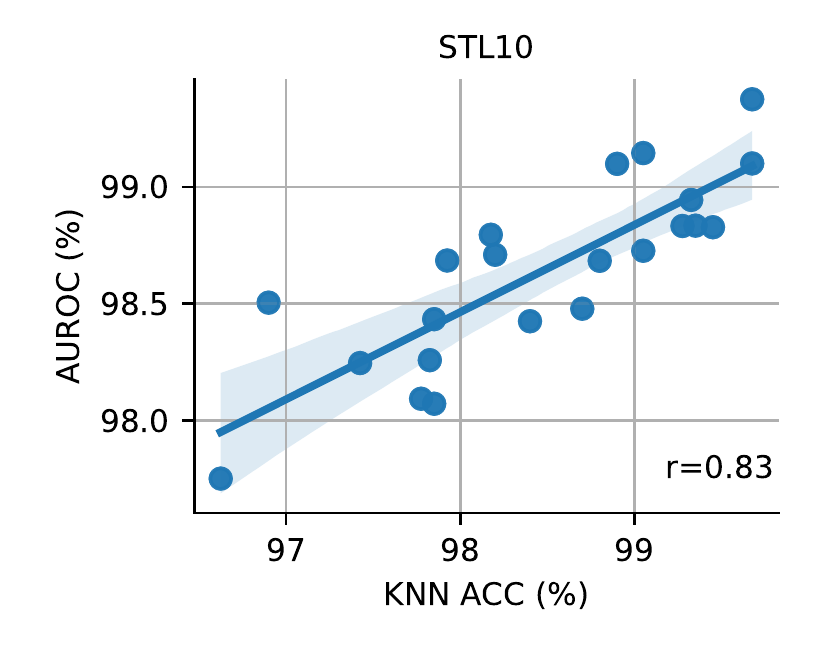}
    \end{subfigure}

    \begin{subfigure}{0.33\textwidth}
        \includegraphics[width=\textwidth]{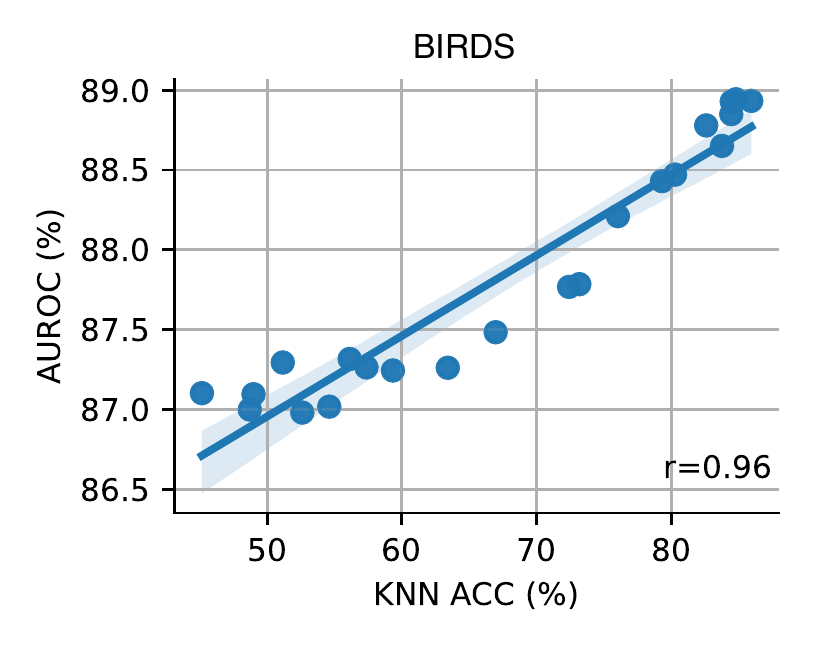}
    \end{subfigure}%
    \begin{subfigure}{0.33\textwidth}
        \includegraphics[width=\textwidth]{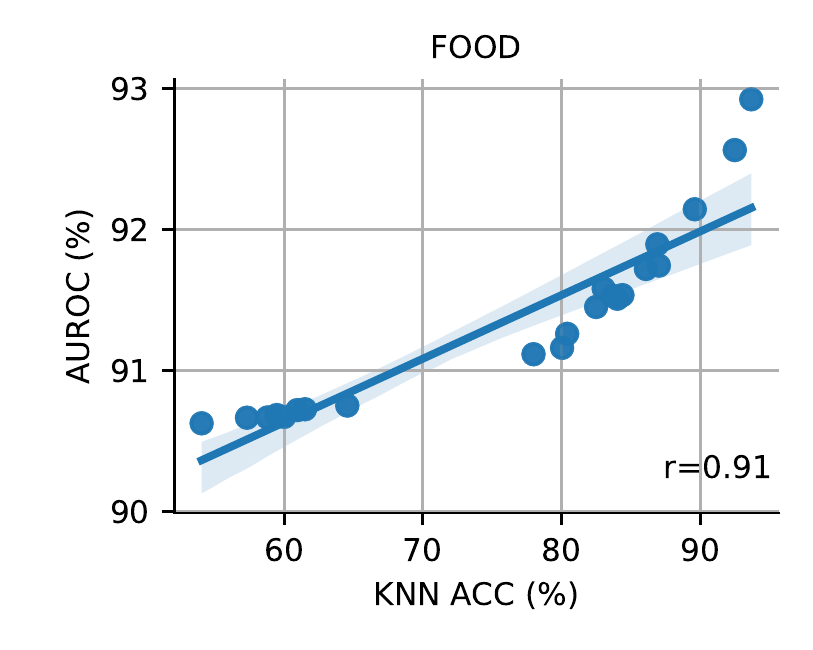}
    \end{subfigure}

    \caption{Strong correlation between Failure detection performance AUROC(\%) for pretrained model and its KNN accuracy performance in each dataset.}
    \label{tab:knn_acc_vs_per_all}
\end{figure*}

\begin{figure*}
    \centering
    \begin{subfigure}{0.33\textwidth}
        \includegraphics[width=\textwidth]{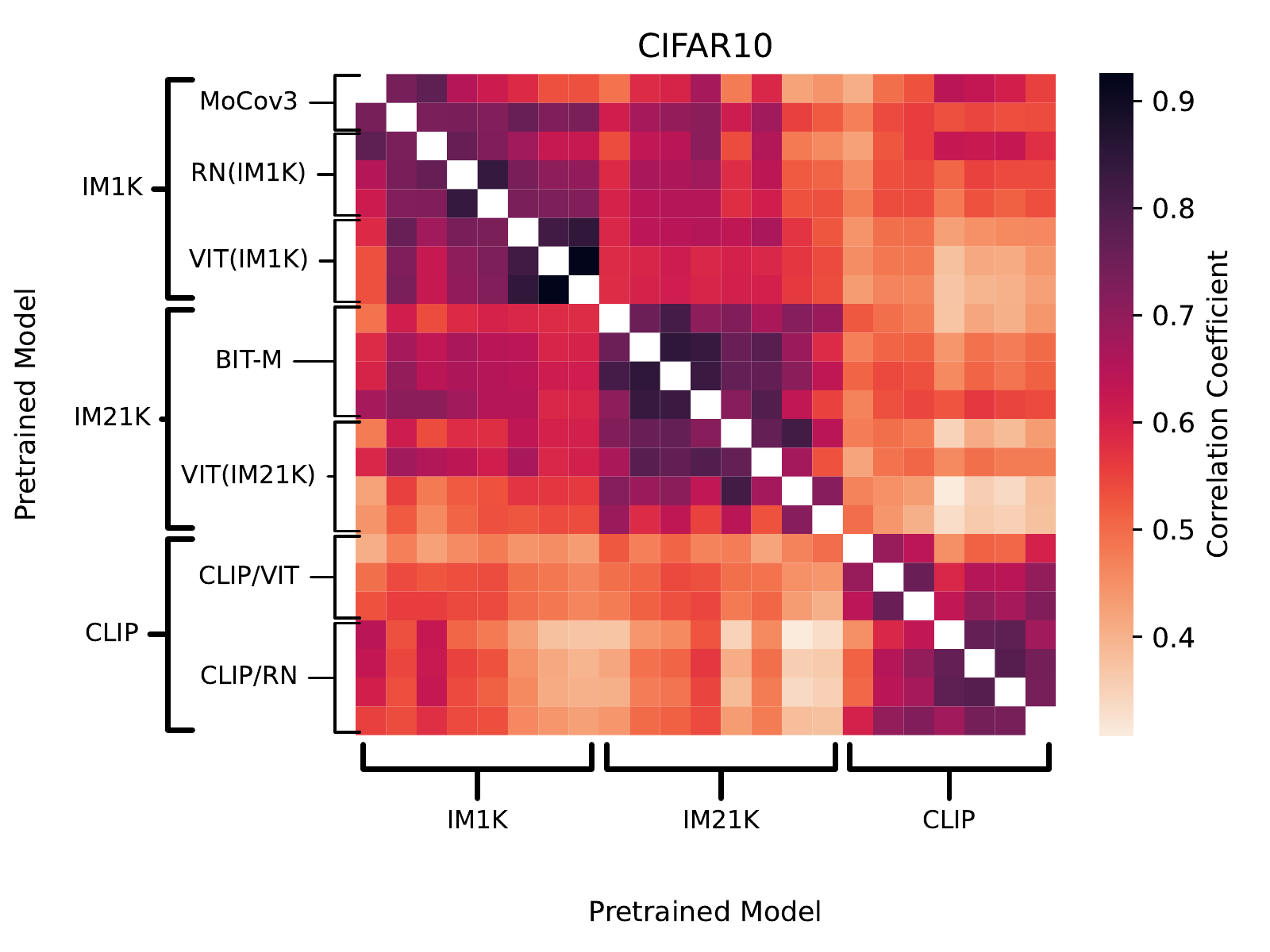}
    \end{subfigure}
    \begin{subfigure}{0.33\textwidth}
        \includegraphics[width=\textwidth]{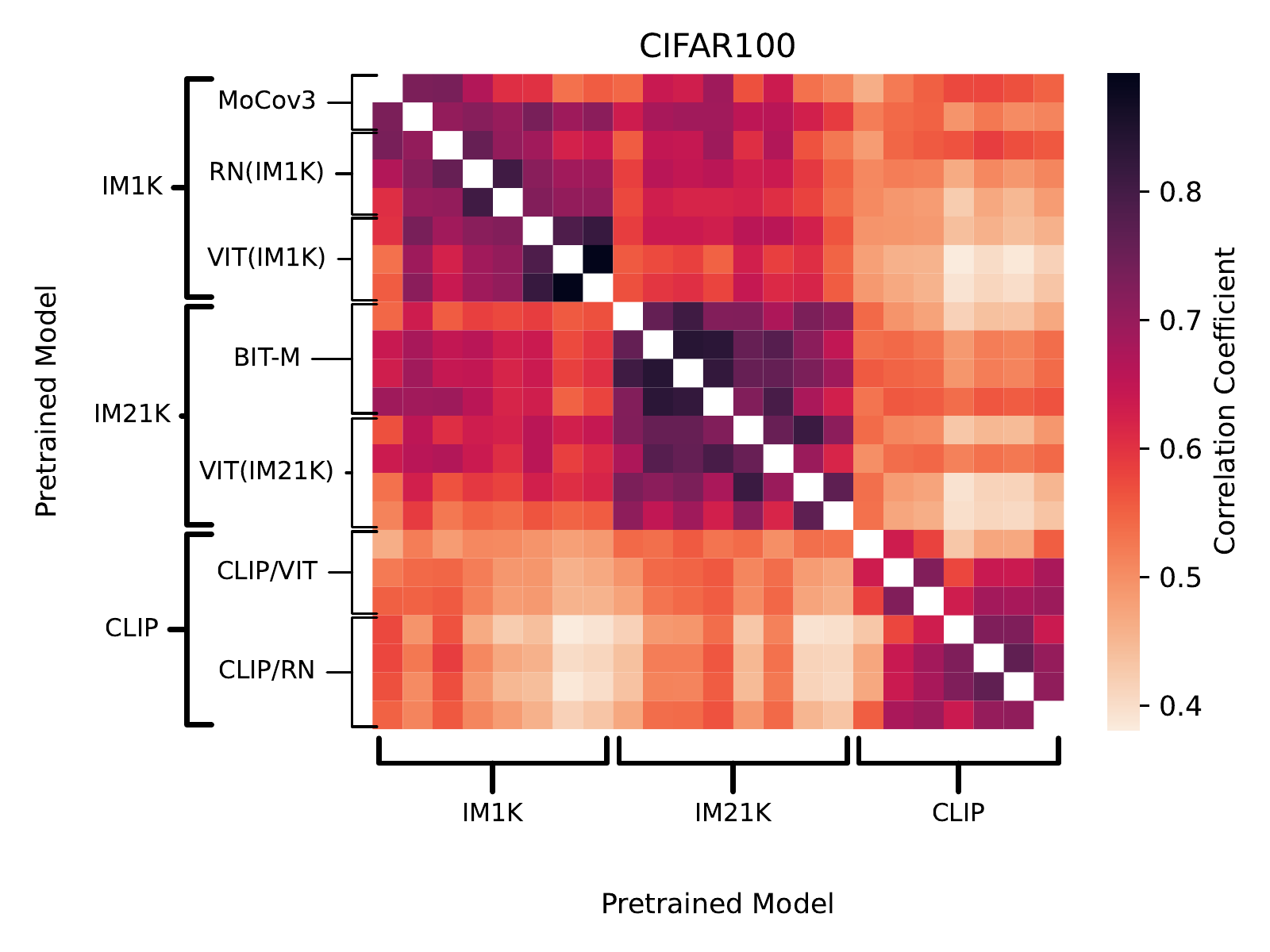}
    \end{subfigure}
    \begin{subfigure}{0.33\textwidth}
        \includegraphics[width=\textwidth]{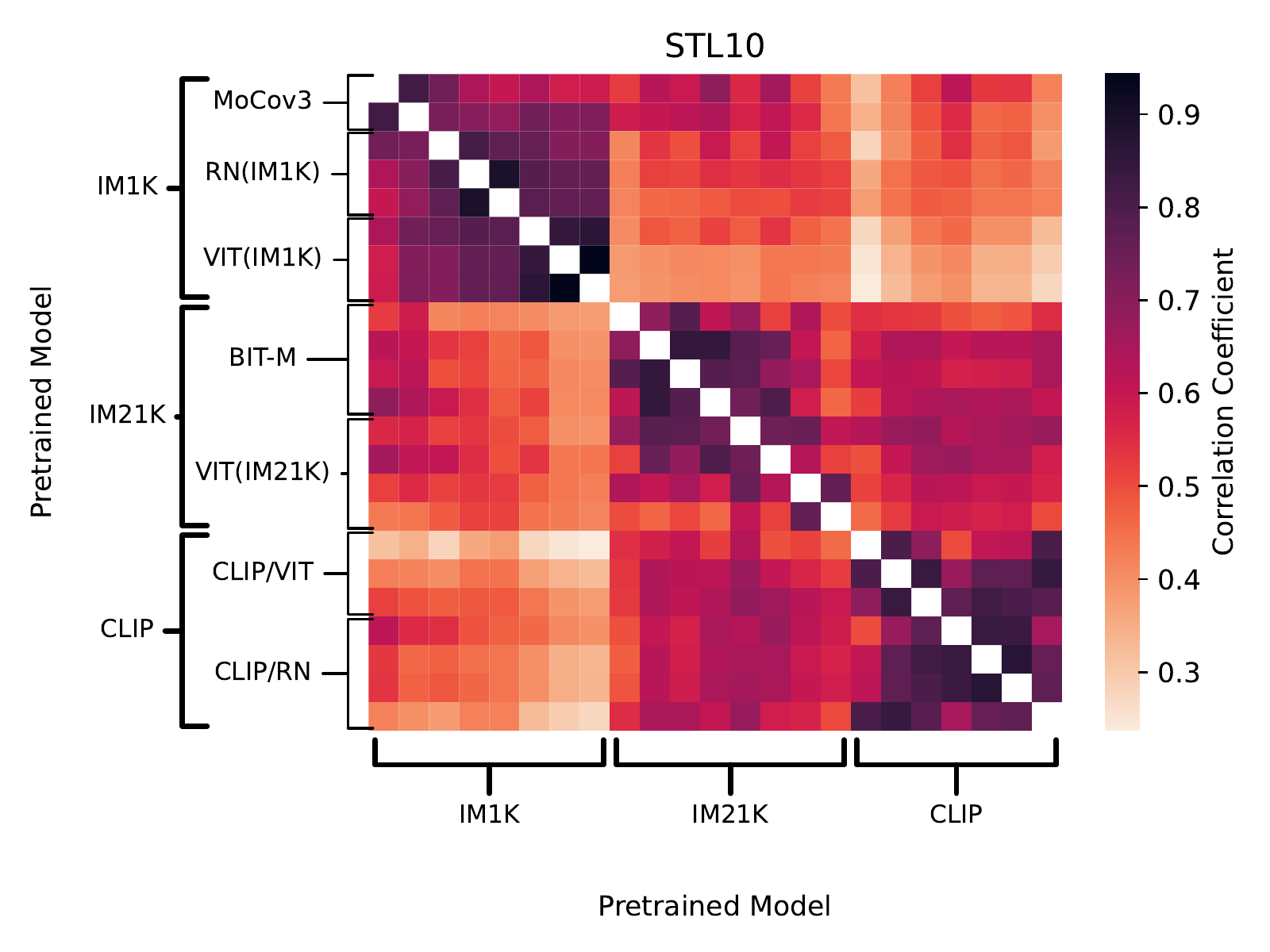}
    \end{subfigure}

    \begin{subfigure}{0.33\textwidth}
        \includegraphics[width=\textwidth]{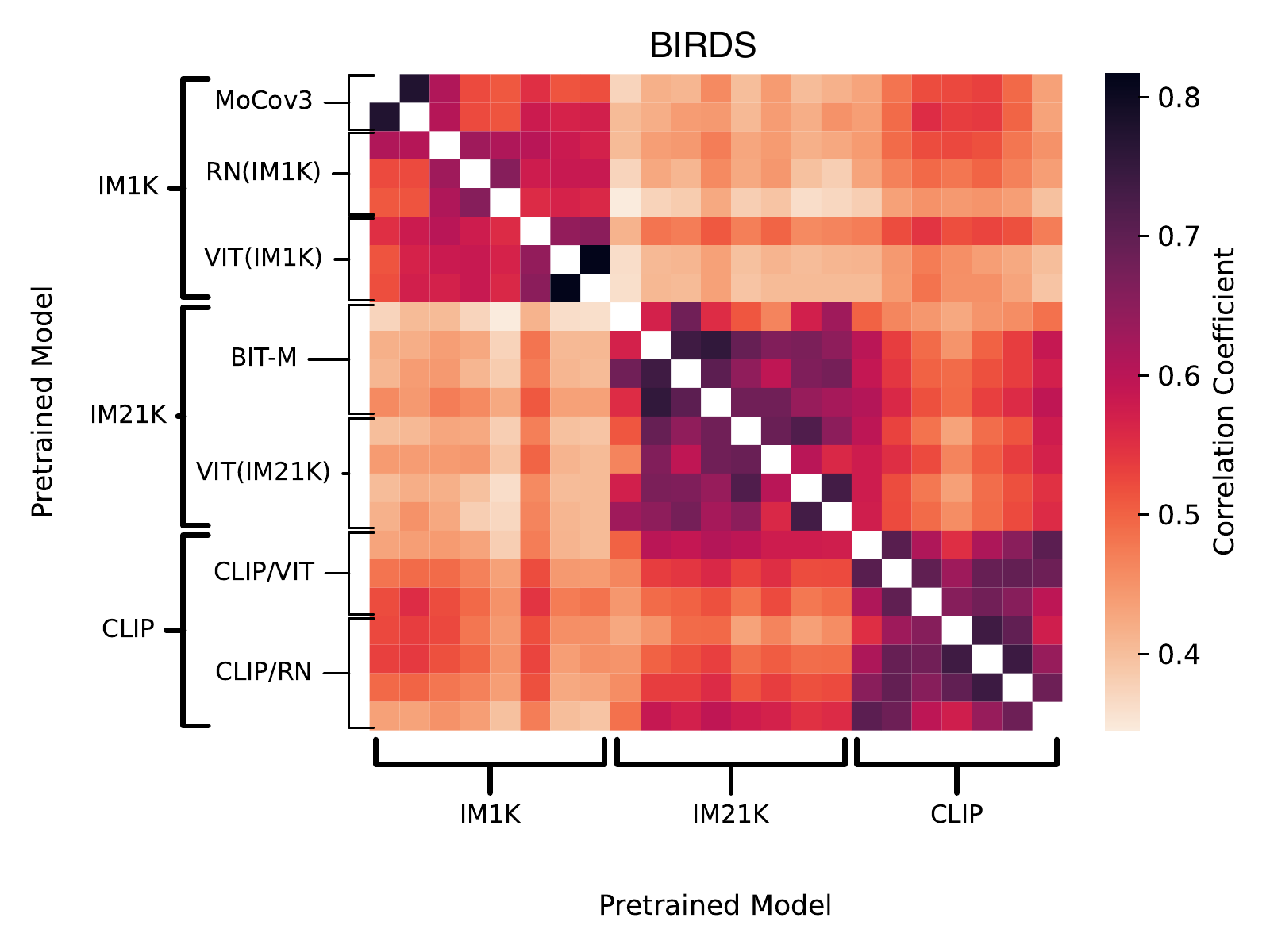}
    \end{subfigure}
    \begin{subfigure}{0.33\textwidth}
        \includegraphics[width=\textwidth]{FIG/heatmap/de_correlaton_food.pdf}
    \end{subfigure}

    \caption{Correlation heatmap inter different pre-trained model families.}
    \label{fig:cor_heatmap_all}
\end{figure*}

\clearpage

\section{Proof of Theoretical Analysis}

\paragraph{Setup} we discuss the cases under a regression problem. Let the training dataset contains $N$ samples, $D^\prime = \{\x^{(i)}, y^{(i)}\}_{i=1}^N$, where $ \mathbf{x}^{(i)} \in  \mathbb{R}^d$ is the $i$-th input sample and $y^{(i)}$ is the ground-truth scalar value. We can denote the predictor $f_{w,B}(x) = \mathbf{w}^\top B(\x) $, which consists of a feature extractor with normalization $B : \mathbb{R}^d \to \mathbb{R}^k$, and a weight vector $\mathbf{w} \in \mathbb{R}^{k \times 1}$. Thus, given the training dataset $D^\prime$, one can obtain a well-trained predictor $f_{w_0, B_0}(x)$ by minimizing a loss function $l$, \ie $\mathbf{w}_0, B_0 = \argmin_{\mathbf{w}, B} l(\x, y, \mathbf{w}, B)$, where the loss function can be squared loss, \etc. We use $\Vert \cdot \Vert$ as $\ell_2$ norm. Let $H: \mathbb{R}^d \to \mathbb{R}^k$ be the pretrained encoder for a foundation model. 

 We denote $l(\x, y, B, \w) \coloneqq \Vert \w^\top B(\x) - y \Vert $. Let $\w_0 \coloneqq \argmin_{\w} \frac{1}{n} \sum_{\x}{ \left\Vert \w^{\top}B_0(x) - y \right\Vert } $. We assume there exists an rotation transformation $U_h \in \mathcal{U} $, where  $\mathcal{U}$ contains all possible rotation transformations and $U_h \coloneqq \argmin_{U} \mathbb{E} \left\Vert B_0(X) - U H(X)  \right\Vert $. We assume that there exists a head $\w_h \in \mathbb{R}^{k \times 1}$, where $ \left\Vert \w_h^\top U_h^{-1} - \w_0^\top \right\Vert = \left\Vert \Delta \right\Vert \leq C $ . We use normalized features, \ie $\left\Vert B_0(\x) \right\Vert = 1$.

\begin{proposition}
Given a test sample $\x$ and its ground-truth value $y$, the trained encoder $B_0$ and its linear head $\w_0$, we have a pretrained encoder $H$ from a foundation model. If the pretrained encoder predicts correctly: $ l(\x, y, H, \w_h) = 0 $, the prediction error $l(\x, y, B_0, \w_0) \leq  ( C + \left\Vert \w_0 \right\Vert) \cdot \left\Vert B_0(\x) - U_h H(\x) \right\Vert + C$.
\end{proposition}

\begin{proof}\label{appendix:proof_test_error}

\begin{align}
    l(\x, y, B_0, \w_0) &= \left\Vert \w_0^\top B_0(\x) - y \right\Vert \\
    &= \left\Vert (\w_h^\top U_h^{-1} - \Delta) (U_h H(\x) + B_0(\x) - U_h H(\x))  - y  \right\Vert \\
    &= \left\Vert \w_h^\top U_h^{-1} U_h H(\x) - y + \w_h^\top U_h^{-1}(B_0(\x) - U_h H(\x))) - \Delta B_0(\x)  \right\Vert \\
    &= \left\Vert \w_h^\top U_h^{-1}(B_0(\x) - U_h H(\x)) - \Delta B_0(\x) \right\Vert \\
    & \stackrel{(a)}{\leq} \left\Vert \w_h^\top \right\Vert \cdot \left\Vert B_0(\x) - U_h H(\x) \right\Vert + C\\
    &\leq ( C + \left\Vert \w_0 \right\Vert) \cdot \left\Vert B_0(\x) - U_h H(\x) \right\Vert + C,
\end{align}
where $(a)$ comes from $\Vert \Delta \Vert \leq C$ and $\Vert U_h \Vert = 1$.
\end{proof}
In summary, that is if two latent spaces highly agree on a sample $\x$, \ie $\left\Vert B_0(\x) - U_h H(\x) \right\Vert$ close to $0$, the model is more likely to predict accurately on $\x$.

Recall that if a transformation $f$ can approximately preserve the distance around $x$ after the transformation, we call this transformation $\delta$-Local Approximation Isometry:
\begin{assumption}
($\delta$-Local Approximation Isometry). $\forall \z \in \mathcal{N}_k(\x), \exists \delta \geq 1, \frac{ \left\Vert f(\z) - f(\x) \right\Vert }{ \left\Vert \z - \x \right\Vert } \in \left( \frac{1}{\delta}, \delta \right)$.
\end{assumption}

\begin{proposition}
(Lower Bound of NDCG scores).
Given an input sample $\x$, $\Pi^*$ and $\Pi^\prime$ are permutations before and after a $\delta$-local approximation isometric transformation $f$, we have $\mathsf{NDCG}( \Pi^*, \Pi^\prime, r ) \geq \frac{1}{\delta^2}$, when $r(\cdot) = 1/d(\cdot, \x)$ and $d$ is a distance scoring function.
\end{proposition}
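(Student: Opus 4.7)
The plan is to establish the bound term-wise: I will show that for every position $i$,
$r(\Pi'_{(i)}) \geq r(\Pi^*_{(i)})/\delta^2$, from which the claim follows immediately, since the NDCG ratio is a weighted sum of these terms against the positive weights $1/\log(i+1)$, and a term-wise inequality with a common multiplicative constant passes to the ratio. Because $r(\cdot) = 1/d(\cdot, \x)$, the term-wise inequality is equivalent to
\[
d(\Pi'_{(i)}, \x) \;\leq\; \delta^{2}\, d(\Pi^*_{(i)}, \x),
\]
i.e., the $i$-th nearest neighbor in the transformed ordering is not too far in the original metric relative to the $i$-th nearest in the original ordering.

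I would prove this in two steps, using the two directions of the local approximation isometry. First, since $\tilde d(\z, \x) := \lVert f(\z) - f(\x) \rVert \leq \delta\, d(\z, \x)$ for every $\z$ in the neighborhood, each of the top-$i$ points $\Pi^*_{(1)}, \ldots, \Pi^*_{(i)}$ in the original ranking satisfies $\tilde d(\Pi^*_{(j)}, \x) \leq \delta\, d(\Pi^*_{(i)}, \x)$ for $j \leq i$. This exhibits $i$ distinct points whose transformed distance to $\x$ is at most $\delta\, d(\Pi^*_{(i)}, \x)$, so the $i$-th smallest transformed distance must also be bounded by this value, giving $\tilde d(\Pi'_{(i)}, \x) \leq \delta\, d(\Pi^*_{(i)}, \x)$. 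Second, applying the other direction of the isometry, $d(\cdot, \x) \leq \delta\, \tilde d(\cdot, \x)$, converts this back to the original metric, yielding $d(\Pi'_{(i)}, \x) \leq \delta^{2}\, d(\Pi^*_{(i)}, \x)$.

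Summing the resulting inequality $r(\Pi'_{(i)})/\log(i+1) \geq r(\Pi^*_{(i)})/\bigl(\delta^{2} \log(i+1)\bigr)$ over $i$ and dividing by the ideal DCG $\sum_i r(\Pi^*_{(i)})/\log(i+1)$ yields $\mathsf{NDCG}(\Pi^*, \Pi', r) \geq 1/\delta^{2}$, as desired. The main obstacle I anticipate is a domain-of-definition subtlety: the local isometry bound is only guaranteed for $\z \in \mathcal{N}_k(\x)$, so the two-sided distortion argument applies cleanly only when the top-$i$ points of both orderings fall within this neighborhood. This can be handled by restricting the NDCG sum to indices $i \leq k$ (natural given the $k$-neighborhood interpretation of $r$ used earlier in the paper), or by assuming the approximation extends slightly beyond $\mathcal{N}_k(\x)$; either way the clean bound $1/\delta^{2}$ is obtained, and the intuitive limit $\delta \to 1$ recovers $\mathsf{NDCG} \to 1$.
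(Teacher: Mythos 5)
Your proof is correct, and it takes a genuinely different route from the paper's. The paper keeps the sums intact: it uses the two sides of the distortion bound to replace every $1/d(\cdot,\x)$ in the numerator and denominator by $1/\lVert f(\cdot)-f(\x)\rVert$ at the cost of a factor $1/\delta$ and $\delta$ respectively, and then observes that the resulting ratio of two sums over the \emph{same} multiset of transformed relevances is at least $1$ by the Rearrangement Inequality (the numerator pairs the decreasing relevances with the decreasing weights $1/\log(i+1)$, the denominator pairs them in a possibly worse order). You instead prove the stronger \emph{per-position} statement $d(\x_{\Pi'_{(i)}},\x)\le \delta^2\, d(\x_{\Pi^*_{(i)}},\x)$ via an order-statistic counting argument (exhibit $i$ points whose transformed distance is at most $\delta\, d(\x_{\Pi^*_{(i)}},\x)$, hence the $i$-th smallest transformed distance is too, then pull back with the other side of the distortion), after which the NDCG bound is an immediate weighted sum and no rearrangement argument is needed. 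Your term-wise bound is more informative: it would give the same $1/\delta^2$ guarantee for any position-weighted ranking measure that is monotone in the per-position relevances, not just NDCG with logarithmic discounting. The domain caveat you flag (the isometry is only assumed on $\mathcal{N}_k(\x)$, yet $\Pi'_{(i)}$ could a priori index a point outside it) is real, but it afflicts the paper's proof equally, since the paper also applies both sides of the distortion bound to all $n$ indices; your proposed fixes (restricting the sum to $i\le k$, or assuming the distortion bound on the relevant superset) are the natural repairs for either argument.
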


\begin{proof}\label{appendix:proof_ndcg_bound}
    \begin{align}
        \mathsf{NDCG}(\Pi^*, \Pi^\prime, r) = 
         \frac{ \sum_{i}^n \frac{ r({\Pi^\prime_{(i)}}) } { \log{(i+1)} }}
         {{\sum_{i}^n \frac{ r({\Pi^*_{(i)}}) } { \log{(i+1)} }} }
         &= \frac{ \sum_{i}^n \frac{ 1/d( \x_{\Pi^\prime_{(i)}}, \x ) } { \log{(i+1)} }  }
         {{\sum_{i}^n \frac{ 1/d(\x_{\Pi^*_{(i)}}, \x) } { \log{(i+1)} }}  } \\
         &\stackrel{(a)}{\geq} \frac{ \sum_{i}^n \frac{ 1/d(f( \x_{\Pi^\prime_{(i)}}), f(\x) ) } { \log{(i+1)} } \cdot \frac{1}{\delta} }
         {{\sum_{i}^n \frac{ 1/d(f(\x_{\Pi^*_{(i)}}), f(\x)) } { \log{(i+1)} }} \cdot \delta } \\
         &\stackrel{(b)}{=} \frac{ \sum_{i}^n \frac{ r^\prime({\Pi^\prime_{(i)}}) } { \log{(i+1)} } \cdot \frac{1}{\delta} }
         {{\sum_{i}^n \frac{ r^\prime({\Pi^*_{(i)}}) } { \log{(i+1)} }} \cdot \delta }\\
         &\stackrel{(c)}{\geq} \frac{1}{\delta^2},
    \end{align}
    where $r^\prime(\cdot) = 1/d(f(\cdot), f(\x))$, $(a)$ comes from the definition of $\delta$-local approximation isometry, $(b)$ is substituting $r'$ and $(c)$ comes from the Rearrangement Inequality - note that in the numerator, the terms $r^{\prime}(\Pi_{(i)}^{\prime})$ are arranged in non-increasing order with $i$, since $\Pi’$ is defined as the permutation which sorts samples based on their distances to $x$ after applying the $f$ function, i.e. the distances defining the score $r^{\prime}$. As such, in the numerator, both $r^{\prime}(\Pi_{(i)}^{\prime})$ and $1/\log(i+1)$ are arranged in the same order:
$r^{\prime}(\Pi_{(1)}^{\prime}) \geq \dots \geq  r^{\prime}(\Pi_{(n)}^{\prime})$ and $1/\log (i+1) \geq  \dots \geq  1/\log(n+1)$. In contrast, in the denominator, the same $r^{\prime}$ terms are present but in a (possibly) different order, so the denominator is smaller than or equal to the numerator by the Rearrangement Inequality. Equality is achieved when the two permutations, $\Pi^*$ and $\Pi^{\prime}$, are the same.

\end{proof}

Intuitively, it shows that when the mapping between two spaces $f$ is a $\delta$-local approximation isometric transformation, if $\delta$ approaches 1 (i.e. more distance-preserving / similar between the two spaces), the proposed metric based on NDCG is guaranteed to be high ($\geq 1/\delta^2$), implying that our score function accurately reflects how similar the two latent spaces are.

\end{document}